\documentclass{article}

\pdfoutput=1
\PassOptionsToPackage{numbers, compress}{natbib}


\usepackage[preprint]{neurips_2024}



\usepackage[utf8]{inputenc} 
\usepackage[T1]{fontenc}    
\usepackage{booktabs}       
\usepackage{amsfonts}       
\usepackage{nicefrac}       
\usepackage{microtype}      
\usepackage{xcolor}         

\usepackage{mathtools}
\usepackage{amsthm}
\usepackage{algorithmic}
\usepackage{wrapfig}
\usepackage[capitalize,noabbrev]{cleveref}
\usepackage{graphicx}
\usepackage{subcaption}
\usepackage{multirow}
\usepackage[shortlabels]{enumitem}
\usepackage[textsize=tiny]{todonotes}
\usepackage{authblk}

\newcommand\preitem{\mdseries\textbullet\space}
\newlist{desclist}{description}{3}
\setlist[desclist,1]{format=\preitem\bfseries,leftmargin=\widthof{\preitem},style=sameline}

\DeclareMathOperator*{\argmin}{argmin}
\DeclareMathOperator*{\argmax}{argmax}
\DeclareMathOperator{\TV}{TV}
\DeclareMathOperator{\cossim}{sim}

\title{Fair Anomaly Detection For Imbalanced Groups}

\author[1]{Ziwei Wu*}
\author[1]{Lecheng Zheng*}
\author[1]{Yuancheng Yu}
\author[1]{Ruizhong Qiu}
\author[2]{John Birge}
\author[1]{Jingrui He}

\affil[1]{University of Illinois at Urbana-Champaign \authorcr
  \{\tt ziweiwu2,lecheng4,yyu51,rq5,jingrui\}@illinois.edu}
\affil[2]{University of Chicago \authorcr
  \{\tt john.birge\}@chicagobooth.edu}

\begin{document}

\newcommand{\theHalgorithm}{\arabic{algorithm}}
\theoremstyle{plain}
\newtheorem{theorem}{Theorem}[section]
\newtheorem{proposition}[theorem]{Proposition}
\newtheorem{lemma}[theorem]{Lemma}
\newtheorem{corollary}[theorem]{Corollary}
\theoremstyle{definition}
\newtheorem{definition}[theorem]{Definition}
\newtheorem{assumption}[theorem]{Assumption}
\theoremstyle{remark}
\newtheorem{remark}[theorem]{Remark}


\newcommand{\he}[1]{{\textcolor{red}{[From He: #1]}}}
\newcommand{\ziwei}[1]{{\textcolor{blue}{[From Ziwei: #1]}}}
\newcommand{\lc}[1]{{\textcolor{brown}{[From Lc: #1]}}}
\newcommand{\lf}[1]{{\textcolor{red}{[From Yuan: #1]}}}
\newcommand{\RZ}[1]{{\textcolor{teal}{[From Ruizhong: #1]}}}
\newcommand{\name}{\textsc{FairAD}}

\newcommand{\eg}{{\sl e.g.}}
\newcommand{\ie}{{\sl i.e.}}
\newcommand{\etc}{{\sl etc.}}
\newcommand{\etal}{{\sl et al.}}

\newcommand{\pp}{\mathcal{P}_P}
\newcommand{\pu}{\mathcal{P}_U}

\newcommand{\stat}[2]{\( #1 {\pm #2} \)}
\newcommand{\bfstat}[2]{ $\textbf{#1$\pm$#2} $}

\maketitle

\begin{abstract}
Anomaly detection (AD) has been widely studied for decades in many real-world applications, including fraud detection in finance, and intrusion detection for cybersecurity, etc. 
Due to the imbalanced nature between protected and unprotected groups and the imbalanced distributions of normal examples and anomalies, the learning objectives of most existing anomaly detection methods tend to solely concentrate on the dominating unprotected group.
Thus, it has been recognized by many researchers about the significance of ensuring model fairness in anomaly detection. 
However, the existing fair anomaly detection methods tend to erroneously label most normal examples from the protected group as anomalies in the imbalanced scenario where the unprotected group is more abundant than the protected group. This phenomenon is caused by the improper design of learning objectives, which statistically focus on learning the frequent patterns (i.e., the unprotected group) while overlooking the under-represented patterns (i.e., the protected group). To address these issues, we propose \name, a fairness-aware anomaly detection method targeting the imbalanced scenario. It consists of a fairness-aware contrastive learning module and a rebalancing autoencoder module to ensure fairness and handle the imbalanced data issue, respectively. Moreover, we provide the theoretical analysis that shows our proposed contrastive learning regularization guarantees group fairness. Empirical studies demonstrate the effectiveness and efficiency of \name\ across multiple real-world datasets. 
\end{abstract}

\section{Introduction}
Anomaly detection (AD), a.k.a. outlier detection, is referred to as the process of detecting data instances that significantly deviate from the majority of data instances \citep{chandola2009anomaly}.
Anomaly detection finds extensive use in a wide variety of applications including financial fraud detection~\cite{DBLP:journals/compsec/WestB16, DBLP:journals/access/HuangMYC18}, pathology analysis in the medical domain~\cite{DBLP:journals/bmcbi/FaustXHGVDD18, DBLP:journals/access/ShvetsovaBFSD21} and intrusion detection for cybersecurity~\cite{DBLP:journals/jnca/LiaoLLT13, DBLP:journals/ett/AhmadKSAA21}. For example, an anomalous traffic pattern in a computer network suggests that a hacked computer is sending out sensitive data to an unauthorized destination \cite{ahmed2016survey}; anomalies in credit card transaction data could indicate credit card or identity theft \citep{rezapour2019anomaly}.


Up until now, a large number of deep anomaly detection methods have been introduced, demonstrating significantly better performance than shallow anomaly detection in addressing challenging detection problems in a variety of real-world applications. For instance, \cite{DBLP:conf/iclr/SohnLYJP21,DBLP:conf/icml/LiCCWTZ23} aim to learn a scalar anomaly scoring function in an end-to-end fashion, while ~\cite{DBLP:conf/kdd/AudibertMGMZ20, DBLP:conf/icde/ChenDHZZZZ21, DBLP:conf/iccv/HouZZXPZ21, DBLP:conf/aaai/YanZXHH21, DBLP:conf/sdm/WangLMGGLW023} propose to learn the patterns for the normal examples via a feature extractor.

Recently, there has been widespread recognition within the AI community about the significance of ensuring model fairness and thus it is highly desirable to establish specific parity or preference constraints in the context of anomaly detection. Take racial bias in anomaly detection as an example. Racial bias has been observed in predictive risk modeling systems to predict the likelihood of future adverse outcomes in child welfare \cite{chouldechova2018case}.
Communities in poverty or specific racial or ethnic groups may face disadvantages due to the reliance on government administrative data. The data collected from these communities, often stemming from their economic status and welfare dependence, can inadvertently categorize them as high-risk anomalies, leading to more frequent investigations to these minority groups.
Consequently, disproportionately flagging minority groups as anomalies not only perpetuates biases but also results in an inefficient allocation of government resources. 

To mitigate potential bias in anomaly detection tasks, numerous researchers \cite{DBLP:conf/kdd/SongLL21, DBLP:conf/fat/ZhangD21, DBLP:conf/iccv/FioresiDS23} advocate for incorporating fairness constraints into their proposed methods.
However, most of these methods tend to erroneously label most normal examples from the protected/minority group as anomalies in an imbalanced data scenario where the unprotected group is more abundant than the protected group.
To better illustrate this issue, we conduct a toy example on the MNIST-USPS dataset\cite{DBLP:conf/fat/ZhangD21}. Figure~\ref{fig:motivation} and Table~\ref{tab:motivation} show the performance of anomaly detection methods evaluated on the MNIST-USPS dataset, where \textit{Recall Diff} refers to the absolute value of recall difference between the protected group and the unprotected group. Note that in such an imbalanced scenario, metrics such as accuracy difference~\cite{zafar2017fairness} are not proper choices. We observe that existing methods either compromise performance for fairness (i.e., low recall rate and low recall difference) or exhibit unfair behavior (i.e., high recall difference); 
The problem of misclassification arises from models focusing on learning frequent patterns in the more abundant unprotected group, potentially overlooking under-represented patterns in the protected group. The issue of group imbalance results in higher errors for protected groups, thus causing misclassifications. Following~\cite{hashimoto2018fairness}, we refer to this phenomenon as \emph{representation disparity}. 

To address these issues, we face the following two major challenges.
\textbf{C1: Handling imbalanced data.} Due to the imbalanced nature between the protected and unprotected groups and the imbalanced distributions of normal examples and anomalies, the learning objectives of most existing anomaly detection methods tend to solely concentrate on the unprotected group.  \textbf{C2: Mitigating the representation disparity.} Traditional anomaly detection methods encounter difficulties in dealing with representation disparity issues, which may worsen in the imbalanced data scenario as protected groups are typically less rich than unprotected groups. 
\begin{figure}[t]
  \begin{minipage}[b]{.48\linewidth}
    \centering
    \includegraphics[width=\linewidth]{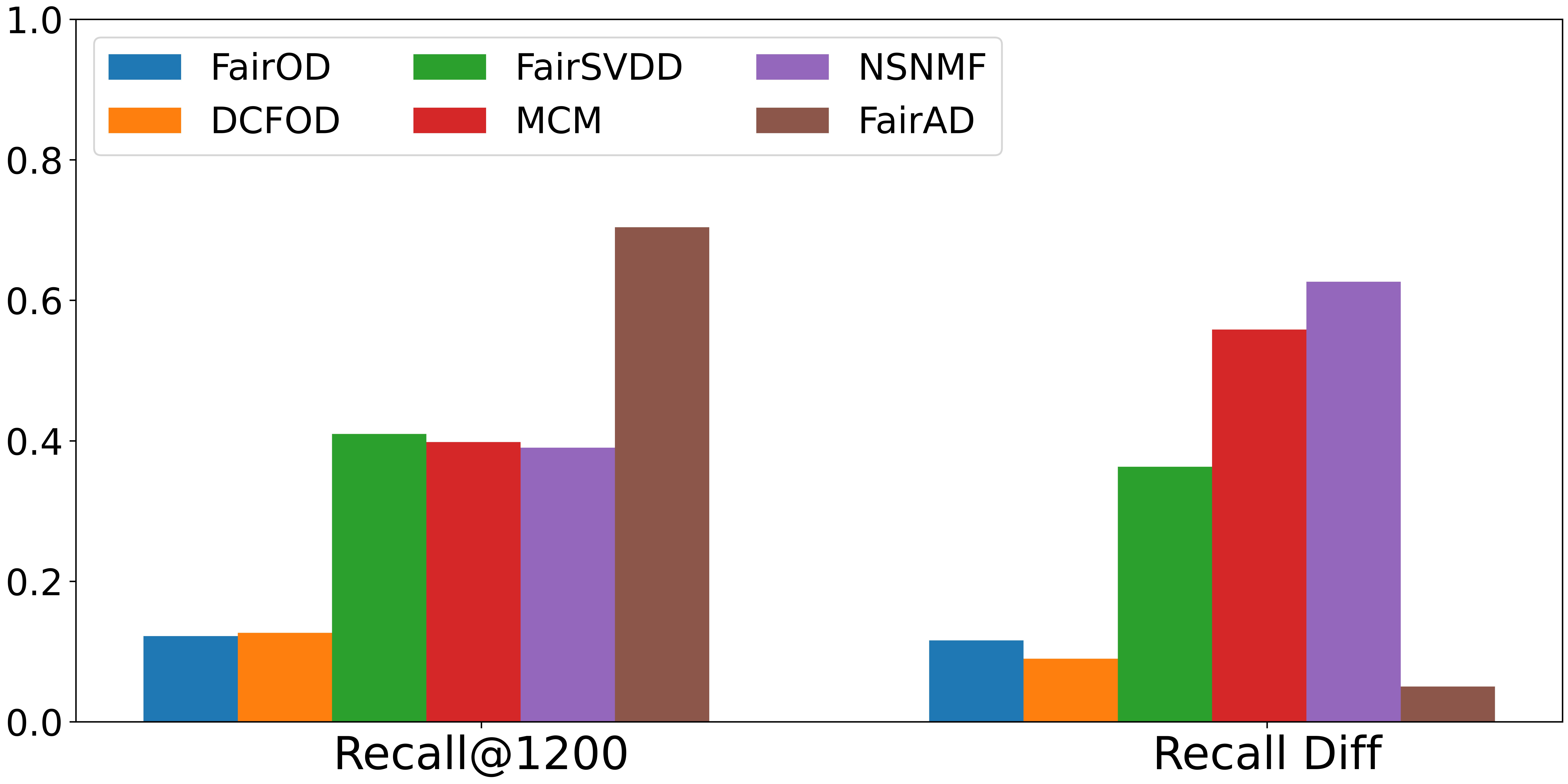}
    \captionof{figure}{Recall@1200 and absolute Recall difference of the existing methods on MNIST-USPS dataset.} 
    \label{fig:motivation}
  \end{minipage}\hfill
  \begin{minipage}[b]{.48\linewidth}
    \centering
    \scalebox{1.0}{
    \begin{tabular}{ *{3}{c} }
    \toprule
      Methods & Unprotected & Protected\\
      \midrule
      FairOD & 117(984) &	35(216) \\
      DCFOD & 124(970) &25(230)\\
      FairSVDD & 292(424) & 198(776)\\
      MCM & 238(327) &	234(873)\\
      NSNMF &196(294) &267(906)\\
      \name &630(809) &	247(391)\\
      \bottomrule
    \end{tabular}}
    \captionof{table}{True anomalies out of identified anomalies (number in the parentheses) of existing methods in each group on MNIST-USPS dataset. }
    \label{tab:motivation}
  \end{minipage}
\end{figure}
To tackle these challenges, in this paper, we propose \name, a fairness-aware contrastive learning-based anomaly detection method for the imbalanced scenario. \name\ mainly consists of two modules: 1) fairness-aware contrastive learning module; 2) re-balancing autoencoder module. Specifically, the fairness-aware contrastive learning module aims to maximize the similarity between the protected and unprotected groups to ensure fairness and address \textbf{C2}.
In addition, we encourage the uniformity of representations for examples within each group, as ensuring uniformity in contrastive learning can be beneficial for the imbalanced group scenario~\cite{DBLP:conf/nips/JiangCCW21}. To further address the negative impact of imbalanced data (i.e., \textbf{C1}), we propose the re-balancing autoencoder module utilizing the learnable weight to reweigh the importance of both the protected and unprotected groups. Combining the two modules, we design a simple yet efficient method \name\ with a theoretical guarantee of fairness. 
Our contributions are summarized below.
\begin{itemize}
    \vspace{-2.5mm}
    \item A fairness-aware anomaly detection method \name\ addressing the representation disparity and imbalanced data issues in the anomaly detection task. 
    \item Theoretical analysis showing that our proposed fair contrastive regularization term guarantees group fairness.
    \item The re-balancing autoencoder equipped with learnable weight alleviating the negative impact of the imbalanced group.
    \item Empirical studies demonstrating the effectiveness and efficiency of \name\ across multiple real-world datasets.
\end{itemize}

The rest of this paper is organized as follows. We first provide the preliminaries in Section \ref{sec:prelim} and then introduce our proposed fair anomaly detection method in Section \ref{sec:method}, followed by the theoretical fairness analysis in Section \ref{sec:theoretical_analysis}. Then, we systematically evaluate the effectiveness and efficiency of \name\ in Section \ref{sec:exp}. 
We finally conclude the paper in Section \ref{sec:conclusion}.

\DeclarePairedDelimiter{\norm}{\lVert}{\rVert}
\DeclarePairedDelimiter{\abs}{\lvert}{\rvert}

\newcommand{\lcon}{\mathcal{L}_{\mathrm{FAC}}}
\newcommand{\lfair}{\mathcal{L}_{\mathrm{fair}}}
\newcommand{\lunif}{\mathcal{L}_{\mathrm{unif}}}
\newcommand{\surcon}{\lcon'}
\newcommand{\surfair}{\lfair'}
\newcommand{\surunif}{\lunif'}

\section{Preliminaries}
\label{sec:prelim}
In this paper, we explore the fairness issue in the unsupervised anomaly detection task. Among the various fairness definitions proposed, there is no consensus about the best one to use. In this work, we focus on the group fairness notion which usually pursues the equity of certain metrics among the groups. For instance, Accuracy Parity \citep{zafar2017fairness} requires the same task accuracy between groups and Equal Opportunity \citep{hardt2016equality} requires the same true positive rate instead. Without loss of generality, we consider the groups here to be the protected group and the unprotected group (e.g., Black and Non-Black in race).
We are given a dataset $D=P\cup U$, where $P = \{x_i^P, y_i^P\}_{i=1}^n$ are examples from the protected group, $U = \{x_i^U, y_i^U\}_{i=1}^m$ from the unprotected group, and $x_i^P,x_i^U$ are sampled i.i.d from distributions $\pp,\pu$ over the input space $\mathbb{R}^d$ respectively. The ground-truth labels $y_i^P, y_i^U \in \mathcal{Y} = \{0, 1\}$ represent whether the example is an anomaly $(y=1)$ or not, which are given by deterministic labeling functions $a_P, a_U: \mathbb{R}^d \rightarrow \mathcal{Y}$, respectively. Note that we do not have access to the labels during training as we focus on the unsupervised anomaly detection setting.

The task of unsupervised anomaly detection is to find a hypothesis
$h: \mathbb{R}^d\rightarrow \mathcal{Y} $ which identifies a subset $\mathcal{A} \subset D $ whose elements deviate significantly from the majority of the data in $D$. This identification is done without the aid of labeled examples, meaning the algorithm must rely on the intrinsic properties of the data, such as distribution, density, or distance metrics, to discern between normal examples and anomalies. The risk of a hypothesis $h$ w.r.t. the true labeling function $a$ under distribution $\mathcal{D}$ using a loss function $\ell: \mathcal{Y} \times \mathcal{Y} \rightarrow \mathbb{R}_{+}$ is defined as: $R^\ell_\mathcal{D}(h, a) \coloneqq \mathbb{E}_{x \sim \mathcal{D}}\left[\ell(h(x), a(x))\right]$. We assume that $\ell$ satisfies the triangle inequality. For notation simplicity, we denote $R^\ell_P(h) \coloneqq R^\ell_{\pp}(h, a_P)$ and $R^\ell_U(h) \coloneqq R^\ell_{\pu}(h, a_U)$. The empirical risks over the protected group $P$ and the unprotected group $U$ are denoted by $\hat{R}^\ell_P$ and $\hat{R}^\ell_U$.

One direction of unsupervised AD is reconstruction-based autoencoder, such as USAD~\cite{DBLP:conf/kdd/AudibertMGMZ20} and DAAD~\cite{DBLP:conf/iccv/HouZZXPZ21}. Assuming the anomalies possess different features than the normal examples, given an autoencoder over the normal examples, it will be hard to compress and reconstruct the anomalies. The anomaly score can then be defined as the reconstruction loss for each test example. Formally, the autoencoder consists of two main components: an encoder $g_e : \mathbb{R}^d \rightarrow \mathbb{R}^r$ and a decoder $g_d : \mathbb{R}^r \rightarrow \mathbb{R}^d$, where $r$ is the dimensionality of the hidden representations. $g_e(x)$ encodes the input $x$ to a hidden representation $z$ that preserves the important aspects of the input. Then, $g_d(z)$ aims to recover $x'\approx x$, a reconstruction of the input from the hidden representation $z$. Overall, the autoencoder can be written as $G = g_d \circ g_e$, i.e. $G(x) = g_d (g_e (x))$. For a given autoencoder-based framework, the anomaly score for $x$ is computed using the reconstruction error as:
\begin{equation}
    s(x) = \norm{x - G(x)}^2,
\end{equation}
where all norms are $\ell_2$ unless otherwise specified. Anomalies tend to exhibit large reconstruction errors because they do not conform to the patterns in the data as coded by the autoencoder. This scoring function is generic in that it applies to many reconstruction-based AD models, which have different parameterizations of the reconstruction function $G$. 
Next, we will present our method design based on the autoencoder framework. For quick reference, we summarize the notation used in the paper in Table \ref{table:symbols} in Appendix.

\section{Proposed Method}
\label{sec:method}
Our proposed \name\ mainly consists of two modules: a Fairness-aware Contrastive Learning Module and a Re-balancing Autoencoder Module. 

\begin{figure}
    \centering
    \subcaptionbox{Failure of AD.\label{fig:sub_fail}}{\includegraphics[width=.24\linewidth]{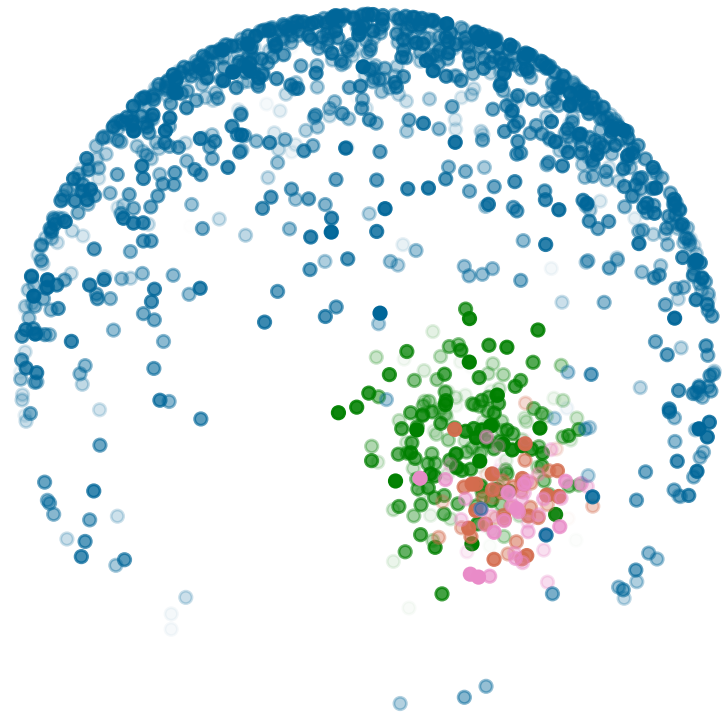}}
    \hfill 
    \subcaptionbox{Uniformity without fairness.\label{fig:over_unif}}{\includegraphics[width=.24\linewidth]{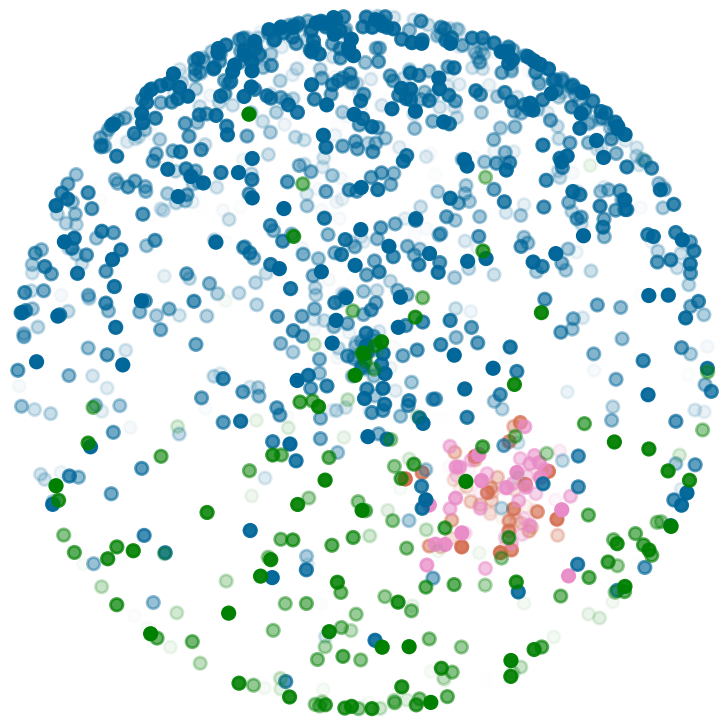}}
    \hfill 
    \subcaptionbox{\name: proper uniformity with fairness.\label{fig:proper_unif}}{\includegraphics[width=.24\linewidth]{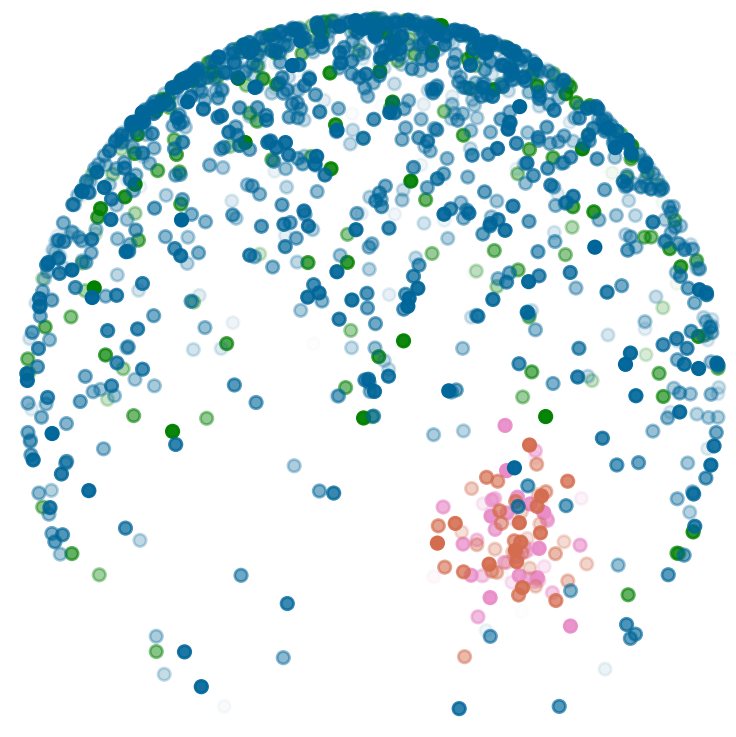}}

    \caption{Illustrations of uniformity. The blue dots and green dots denote the normal examples from the unprotected group and protected group respectively. The red and pink dots denote the anomalies from the unprotected group and protected group respectively.
    In (a), many existing AD methods overly flag the examples from the protected groups as anomalies. In (b), traditional contrastive regularization does not consider group fairness. In (c), our method ensures group fairness while maintaining proper uniformity.}
    \label{fig:contrastive}
    \vspace{-4mm}
\end{figure}


\subsection{Fairness-aware Contrastive Learning }
Existing anomaly detection models \cite{DBLP:conf/kdd/SongLL21, DBLP:conf/fat/ZhangD21, DBLP:conf/iccv/FioresiDS23} statistically focus on learning the frequent patterns (i.e., the unprotected group), while overlooking the under-represented patterns (i.e., the protected group) within the observed imbalanced data. Due to the lower contribution of protected groups to the overall learning objective (e.g., minimizing expected reconstruction loss), examples from the protected groups may experience systematically higher errors. Thus, they tend to erroneously label most normal examples from the protected group as anomalies, producing unfair outcomes as shown in Figure \ref{fig:sub_fail}. 

Recent works~\cite{DBLP:conf/icml/0001I20, DBLP:conf/iclr/SohnLYJP21} have shown that encouraging uniformity with contrastive learning can alleviate this issue by pushing examples uniformly distributed in the unit hypersphere, as illustrated in Figure \ref{fig:over_unif}. Therefore, one naive solution is to implement contrastive learning \cite{chen2020simple} to learn representations by distinguishing different views of one example from other examples 
as follows: 
\begin{small}
\begin{align}
    \mathcal{L}_\mathrm{SimCLR} = - \sum_{z_j\in P \cup U} \log \frac{\text{sim}(z_j, z_j^+)}{\sum_{z_k\in P \cup U} \text{sim}(z_j, z_k)},
\end{align}
\end{small}
where $z_j= g_e(x_j)$ is the hidden representation, $U, P$ are slightly abused to denote the empirical distributions of the hidden representations of the unprotected and protected group, $z_j^+$ is obtained by an augmentation function to form a positive pair with $z_j$, and $\text{sim}(a, b)= \exp(\frac{a^Tb}{|a||b|})$. By minimizing $\mathcal{L}_\mathrm{SimCLR}$, we encourage the uniformity of the representations of the two groups.

However, as shown in Figure \ref{fig:over_unif}, although the protected examples deviate from anomalies after encouraging uniformity, group fairness could not be guaranteed by the traditional contrastive learning loss. 
To promote fairness between the protected group and the unprotected group, we further propose to maximize the cosine similarity between the representations of the two groups, as shown in Figure \ref{fig:proper_unif}. Formally, we minimize the following fairness-aware contrastive loss:
\begin{align}
\label{equ:contrastive}
\lcon =& - \log \frac{\frac{1}{mn}\sum_{j\in [n]}\sum_{k\in [m]}\text{sim}\left(z_j^P, z_k^U\right)}{\frac{1}{m(m-1)}\sum_{j\ne k} \text{sim}\left(z_j^U, z_k^U\right) +  \frac{1}{n(n-1)}\sum_{j\ne k} \text{sim}\left(z_j^P, z_k^P\right)} \\
=&  \underbrace{-\log\left(\frac{\sum_{j}\sum_{k}\text{sim}\left(z_j^P, z_k^U\right)}{mn}\right)}_{\lfair} \nonumber 
+ \underbrace{\log\left(\frac{\sum_{j\ne k} \text{sim}\left(z_j^U, z_k^U\right)}{m(m-1)} + \frac{\sum_{j\ne k} \text{sim}\left(z_j^P, z_k^P\right)}{n(n-1)}\right)}_{\lunif}  \nonumber
\end{align}

Following the interpretation of contrastive loss in~\cite{DBLP:conf/icml/0001I20}, the numerator (i.e., $\lfair$) can be interpreted as ensuring the fairness of two groups and the denominator (i.e., $\lunif$) can be interpreted as encouraging the diversity or uniformity of the representations in the unit hypersphere.
Besides, we show that our proposed fair contrastive regularization term guarantees group fairness with theoretical support in Section~\ref{sec:theoretical_analysis}.

\subsection{Re-balancing Autoencoder}
We then introduce the autoencoder-based module of our method. The existing autoencoder-based AD frameworks~\cite{DBLP:conf/kdd/SongLL21, DBLP:conf/kdd/AudibertMGMZ20} aim to optimize the following reconstruction loss:
\begin{equation}
\label{equ:recons}
    \mathcal{L}_\mathrm{REC} = \sum_{x_i\in {P \cup U}} \norm{x_i - G(x_i)}^2 =  \underbrace{\sum_{i=1}^{n} \norm{x_i^P - G\left(x_i^P\right)}^2}_{\mathcal{L}_P} + \underbrace{\sum_{i=1}^{m} \norm{x_i^U - G\left(x_i^U\right)}^2}_{\mathcal{L}_U}.
\end{equation}

As these AD approaches fail to consider the data imbalance nature of the protected and unprotected groups, the learning objective in Equation (\ref{equ:recons}) tends to solely concentrate on learning frequent patterns of the unprotected group (i.e., $\mathcal{L}_U$), yielding higher reconstruction errors for the examples from the protected group. 
Consequently, existing methods usually overly flag the examples from the protected group as anomalies, thus having a higher recall difference, as illustrated in Figure \ref{fig:motivation}. 

To address the data imbalance issue between the two groups (i.e., \textbf{C1} in the introduction), we design a re-balancing autoencoder by minimizing the reweighted reconstruction loss as follows:
\begin{equation}
\mathcal{L}_{\mathrm{REC}}  = (1 - \epsilon)  \mathcal{L}_U + \epsilon  \mathcal{L}_P ,
\end{equation}
A proper weight $\epsilon$ should promote the model fitting on the normal examples in both protected and unprotected groups. Consider the four subgroups of data samples in the task of fair anomaly detection: unprotected/protected normal examples (UN/PN) and unprotected/protected anomalies (UA/PA). Since ideally the model should only fit UN and PN, we assume that the model is capable of fitting two out of the four subgroups. For the design of $\epsilon$ we have the following lemma:

\begin{lemma}\label{lemma:epsilon} 
Let $\mathcal{L}_0^t$ denote the loss of the unfitted model on the subgroup $t \in$ \{UN, PN, UA, PA\}, and let $\mathcal{L}_1^t$ denote the loss of the fitted model on the subgroup $t$, and $\Delta^t = \mathcal{L}_0^t - \mathcal{L}_1^t > 0$ means the difference of loss between the fitted model and the unfitted one on the subgroup $t$. A proper weight that promotes model fitting on normal examples in both protected and unprotected groups should be within the range $ \frac{\Delta^{UA}}{\Delta^{UA}+ \Delta^{PN}} < \epsilon < \frac{\Delta^{UN}}{\Delta^{UN}+ \Delta^{PA}}$.

\end{lemma}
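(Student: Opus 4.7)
The plan is to treat the question as a finite combinatorial comparison: which pair $S$ of subgroups, out of the $\binom{4}{2}=6$ possibilities, minimizes $\mathcal{L}_{\mathrm{REC}}$ as a function of $\epsilon$. Under the stated assumption that the model fits exactly two subgroups, with losses $\mathcal{L}_1^t$ on fitted subgroups and $\mathcal{L}_0^t$ on unfitted ones, each $\mathcal{L}_{\mathrm{REC}}(S)$ is affine in $\epsilon$, so the problem reduces to checking which $S$ attains the minimum at a given $\epsilon$ and for which values of $\epsilon$ this minimizer equals $\{\mathrm{UN},\mathrm{PN}\}$.

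First I would expand $\mathcal{L}_U = \mathcal{L}^{\mathrm{UN}} + \mathcal{L}^{\mathrm{UA}}$ and $\mathcal{L}_P = \mathcal{L}^{\mathrm{PN}} + \mathcal{L}^{\mathrm{PA}}$ and write out $\mathcal{L}_{\mathrm{REC}}(S)$ for each candidate $S$. The two natural failure modes that generate the claimed thresholds are $S'=\{\mathrm{UN},\mathrm{UA}\}$ (the model collapses onto the dominant unprotected group, fitting both its normal and anomalous patterns) and $S''=\{\mathrm{PN},\mathrm{PA}\}$ (the symmetric collapse onto the protected group). Second, I would carry out the two critical one-line subtractions. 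Comparing the desired configuration against $S'$ yields $\mathcal{L}_{\mathrm{REC}}(\{\mathrm{UN},\mathrm{PN}\}) - \mathcal{L}_{\mathrm{REC}}(S') = (1-\epsilon)\Delta^{\mathrm{UA}} - \epsilon \Delta^{\mathrm{PN}}$, which is negative iff $\epsilon > \frac{\Delta^{\mathrm{UA}}}{\Delta^{\mathrm{UA}}+\Delta^{\mathrm{PN}}}$; comparing against $S''$ yields $\epsilon \Delta^{\mathrm{PA}} - (1-\epsilon)\Delta^{\mathrm{UN}}$, which is negative iff $\epsilon < \frac{\Delta^{\mathrm{UN}}}{\Delta^{\mathrm{UN}}+\Delta^{\mathrm{PA}}}$. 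Together these recover exactly the stated interval.

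Third, the three remaining configurations $\{\mathrm{UN},\mathrm{PA}\}$, $\{\mathrm{PN},\mathrm{UA}\}$, and $\{\mathrm{UA},\mathrm{PA}\}$ each correspond to the model burning fitting capacity on an anomaly subgroup when a same-group normal subgroup is still available. A short computation shows that $\{\mathrm{UN},\mathrm{PN}\}$ dominates these three alternatives precisely under within-group inequalities of the form $\Delta^{\mathrm{UN}} > \Delta^{\mathrm{UA}}$ and $\Delta^{\mathrm{PN}} > \Delta^{\mathrm{PA}}$, which simply say that fitting the large, concentrated normal cluster of a group reduces reconstruction loss more than fitting its small, diffuse anomaly cluster. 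This is the structural assumption already implicit in reconstruction-based anomaly detection, so I would surface it once and move on.

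The main obstacle is in fact this last step: as written the lemma only asserts the two displayed thresholds and does not explicitly quantify the within-group dominance of normals over anomalies. I would therefore either restrict the formal statement to the two contrast configurations $S'$ and $S''$ (which is arguably what the informal discussion is actually about: rebalancing against one-sided collapse), or add the within-group assumptions $\Delta^{\mathrm{UN}} > \Delta^{\mathrm{UA}}$ and $\Delta^{\mathrm{PN}} > \Delta^{\mathrm{PA}}$ as explicit hypotheses. With that modeling choice pinned down, the proof is just the two line subtractions above plus three cases that follow immediately from the same inequalities.
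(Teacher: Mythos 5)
Your proposal is correct and follows essentially the same route as the paper: the paper's proof likewise compares the weighted loss of the target configuration $\{\mathrm{UN},\mathrm{PN}\}$ against the five alternative pairs, extracting the threshold $\epsilon > \frac{\Delta^{UA}}{\Delta^{UA}+\Delta^{PN}}$ from the comparison with $\{\mathrm{UN},\mathrm{UA}\}$ and $\epsilon < \frac{\Delta^{UN}}{\Delta^{UN}+\Delta^{PA}}$ from $\{\mathrm{PN},\mathrm{PA}\}$, and disposing of $\{\mathrm{UA},\mathrm{PN}\}$, $\{\mathrm{UN},\mathrm{PA}\}$, $\{\mathrm{UA},\mathrm{PA}\}$ via $\Delta^{UN}>\Delta^{UA}$ and $\Delta^{PN}>\Delta^{PA}$. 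The only difference is presentational: the paper asserts those within-group inequalities ``naturally hold,'' whereas you rightly flag them as explicit structural hypotheses.
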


Although $\frac{\Delta^{UA}}{\Delta^{UA}+ \Delta^{PN}}$ and $\frac{\Delta^{UN}}{\Delta^{UN}+ \Delta^{PA}}$ are unknown, we propose a design of $\epsilon$ that provably lies in this range: $\epsilon = \frac{\mathcal{L}_0^U - \mathcal{L}_U}{\mathcal{L}_0^U - \mathcal{L}_U + \mathcal{L}_0^P - \mathcal{L}_P} $ where $\mathcal{L}^U_0 = \mathcal{L}^{UN}_0 +  \mathcal{L}^{UA}_0$ and $\mathcal{L}^P_0 = \mathcal{L}^{PN}_0 +  \mathcal{L}^{PA}_0$.
 We estimate $\mathcal L^U_0 = \sum_{i \in U} \|x_i - \overline{G(x)}\|^2$
 where $\overline{G(x)} = \frac{1}{|U|} \sum_{i \in U} G(x_i) $,  and $\mathcal L^P_0 = \sum_{i \in P} \|x_i - \overline{G(x)}\|^2$ where $\overline{G(x)} = \frac{1}{|P|} \sum_{i \in P} G(x_i) $.
 The proof of Lemma \ref{lemma:epsilon} and the justification of our design are provided in Appendix \ref{app:prooflemmaepsilon}. 
Finally, the overall training scheme of \name\ is to minimize:
\begin{equation*}
\mathcal{L}_{\mathrm{overall}} = \mathcal{L}_{\mathrm{REC}} + \alpha  \mathcal{L}_{\mathrm{FAC}},
\end{equation*}
where $\alpha$ is a hyperparameter to balance the reconstruction loss and the contrastive loss. During the inference stage, we rank the reconstruction error of each example and pick the top $k$ examples as anomalies.

\section{Theoretical Analysis}
\label{sec:theoretical_analysis}
In this section, we show how our proposed method promotes fairness. We focus on the group fairness notions where the difference in certain performance metrics between the two groups is considered. We first introduce the definition of $f$-divergence to help formulate an upper bound on the performance difference of \name:
\begin{definition}($f$-divergence \cite{ali1966general} )
    Let \( P \) and \( Q \) be two distribution functions with densities \( p \) and \( q \), respectively. 
Let \( p \) be absolutely continuous w.r.t \( q \) and both be absolutely continuous with respect to a base measure \( dx \). 
Let \( f: \mathbb{R}_+ \rightarrow \mathbb{R} \) be a convex, lower semi-continuous function that satisfies \( f(1) = 0 \). 
The \( f \)-divergence \( D_{f} \) is defined as:
\begin{equation}
D_{f} (P \parallel Q) = \int q(x) f \left( \frac{p(x)}{q(x)} \right) dx.
\end{equation}
\end{definition}

Many popular divergences that are heavily used in machine learning are special cases of $f$-divergences, and we include some in Table \ref{tab:fdivergences} in Appendix \ref{appendix:divergence}. \cite{nguyen2010estimating} derived a general variational approach for estimating $f$-divergence from examples by transforming the estimation problem into a variational optimization problem. They show that any $f$-divergence can be written as:
\begin{equation}
\label{equ:variation}
D_{f}(P \parallel Q) \geq \sup_{T \in \mathcal{T}} \mathbb{E}_{x \sim P}[T(x)] - \mathbb{E}_{x \sim Q}[f^*(T(x))]
\end{equation}
where \( f^* \) is the (Fenchel) conjugate function of \( f : \mathbb{R}_+ \rightarrow \mathbb{R} \) defined as \( f^*(y) := \sup_{x \in \mathbb{R}_+}\{xy - f(x)\} \), \( T : \mathcal{X} \rightarrow \text{dom} \, f^* \), and \( \mathcal{T} \) is the set of all measurable functions. 

Given that $D_{f}(P\parallel Q)$ involves the supremum over all measurable functions and does not account for the hypothesis class, and that it cannot be estimated from finite examples of arbitrary distributions \cite{kifer2004detecting}, we further consider a discrepancy which helps relieve these issues based on the variational characterization of $f$-divergence in Equation (\ref{equ:variation}):

\begin{definition} ($D^f_{h, \mathcal{H}}$ discrepancy \cite{acuna2021f})
    Let \( f^* \) be the Fenchel conjugate of a convex, lower semi-continuous function \( f \) that satisfies \( f(1) = 0 \), and let \( \hat{T} \) be a set of measurable functions such that \( \hat{T} = \{\ell(h(x), h'(x)) : h, h' \in \mathcal{H}\} \). 
We define the discrepancy between the two distributions \( P \) and \( Q \) as:

\begin{small}
\begin{equation*}
\begin{aligned}
D^f_{h, \mathcal{H}} (P \parallel Q) := & \sup_{h' \in \mathcal{H}} | \mathbb{E}_{x \sim P} [\ell(h(x), h'(x))] - \mathbb{E}_{x \sim Q} [f^* (\ell(h(x), h'(x)))] |
\end{aligned}
\end{equation*}    
\end{small}
\end{definition}

From the definition we can easily get $D^f_{h, \mathcal{H}} (P \parallel Q) \leq D_f (P \parallel Q)$.
We then introduce a useful tool, Rademacher complexity\cite{shalev2014understanding} (detailed definition provided in Appendix \ref{appendix:rademacher}), and its commonly used property:
\begin{lemma}(Property of Rademacher complexity \cite{mohri2018foundations}). For any \(\delta \in (0, 1)\), with probability at least $1- \delta$ over the draw of an i.i.d. sample $D$ of size $|D|$, the following inequality holds for all $h \in \mathcal{H}$:
\begin{equation}
\label{equ:rademacher}
|R^\ell_D(h) - \hat{R}^\ell_D(h)| \leq 2\mathfrak{R}_D(\ell \circ \mathcal{H}) + \sqrt{\frac{\log \frac{1}{\delta}}{2|D|}}
\end{equation}

\end{lemma}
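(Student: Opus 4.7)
The plan is to invoke the standard two-step machinery of bounded-difference concentration followed by symmetrization, applied to the one-sided supremum deviation and then lifted to the absolute value by a union bound.

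First I would define $\Phi(D) := \sup_{h \in \mathcal{H}}\bigl(R^\ell_D(h) - \hat R^\ell_D(h)\bigr)$ and verify that $\Phi$ satisfies the bounded-differences condition with constants $c_i = 1/|D|$, under the implicit normalization that $\ell$ takes values in $[0,1]$ (which is the convention behind the explicit constants appearing in the stated bound). Replacing a single point $x_i$ in $D$ by an independent copy $x_i'$ changes $\hat R^\ell_D(h)$ by at most $1/|D|$ uniformly in $h$, while $R^\ell_D(h)$ is unchanged, so $\Phi$ changes by at most $1/|D|$. McDiarmid's inequality then yields, with probability at least $1-\delta$,
\[
\Phi(D) \;\le\; \mathbb{E}[\Phi(D)] + \sqrt{\tfrac{\log(1/\delta)}{2|D|}}.
\]

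Second, I would bound $\mathbb{E}[\Phi(D)]$ by the classical ghost-sample symmetrization. Introducing an independent copy $D' = \{x_i'\}$ of size $|D|$ and using that $R^\ell_D(h) = \mathbb{E}_{D'}[\hat R^\ell_{D'}(h)]$, Jensen's inequality pushes the expectation inside the supremum:
\[
\mathbb{E}[\Phi(D)] \;\le\; \mathbb{E}_{D,D'}\Bigl[\sup_{h \in \mathcal{H}}\bigl(\hat R^\ell_{D'}(h) - \hat R^\ell_D(h)\bigr)\Bigr].
\]
Because $(x_i, x_i')$ and $(x_i', x_i)$ have the same joint law, one may insert i.i.d.\ Rademacher signs $\sigma_i \in \{\pm 1\}$ in front of $\ell(h(x_i'), a(x_i')) - \ell(h(x_i), a(x_i))$ without changing the expectation. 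Splitting the two terms and applying the triangle inequality gives $\mathbb{E}[\Phi(D)] \le 2\,\mathfrak{R}_D(\ell \circ \mathcal{H})$, matching the leading Rademacher term in the lemma.

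Combining the two steps establishes the one-sided inequality $R^\ell_D(h) - \hat R^\ell_D(h) \le 2\,\mathfrak{R}_D(\ell \circ \mathcal{H}) + \sqrt{\log(1/\delta)/(2|D|)}$ uniformly in $h$ with probability at least $1-\delta$. Repeating the argument with $\Phi'(D) := \sup_h \bigl(\hat R^\ell_D(h) - R^\ell_D(h)\bigr)$ and taking a union bound yields the two-sided absolute-value statement (up to replacing $\delta$ by $\delta/2$, an adjustment that is standardly absorbed into the stated form). I expect the main obstacle to be bookkeeping rather than substance: one must confirm that the Rademacher complexity $\mathfrak{R}_D(\ell \circ \mathcal{H})$ defined in the appendix uses the empirical-on-$D$ (not data-independent) normalization matching the display, and that the loss-boundedness hypothesis required for McDiarmid with constants $c_i = 1/|D|$ is indeed in force in the paper's setting.
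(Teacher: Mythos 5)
Your proposal is correct and is exactly the standard argument behind this lemma, which the paper does not prove itself but cites from Mohri et al.\ (it is their Theorem~3.3): McDiarmid's bounded-differences inequality applied to the supremum deviation with constants $1/|D|$ under the $[0,1]$-bounded loss, ghost-sample symmetrization to bound the expectation by $2\mathfrak{R}_D(\ell\circ\mathcal{H})$, and a union bound over the two one-sided events for the absolute-value form. Your two bookkeeping caveats are also apt: the two-sided statement formally costs a $\delta\mapsto\delta/2$ (so $\log(2/\delta)$), and the paper's appendix defines $\mathfrak{R}_D$ as the empirical complexity while the stated constants match the expected-complexity version --- both are minor slacknesses in the paper's statement rather than gaps in your argument.
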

where $\mathfrak{R}_D(\ell \circ \mathcal{H})$ is the  Rademacher complexity of the function class $\ell \circ \mathcal{H}$ given data $D$.
With this property, we now show that $D^f_{h,\mathcal{H}}$ can be estimated from finite examples:
\begin{lemma} 
\label{lemma:df}
Suppose \(\ell : \mathcal{Y} \times \mathcal{Y} \rightarrow [0, 1]\), \(f^*\) is L-Lipschitz continuous, and \([0, 1] \subseteq \text{dom} \, f^*\). Let \(U\) and \(P\) be two empirical distributions corresponding to datasets containing \(m\) and $n$ data points sampled i.i.d. from \(P_U\) and \(P_P\), respectively. Let us note \(\mathfrak{R}\) the Rademacher complexity of a given hypothesis class, and \(\ell \circ \mathcal{H} := \{x \mapsto \ell(h(x), h'(x)) : h, h' \in \mathcal{H}\}\). For any \(\delta \in (0, 1)\), with probability at least \(1 - \delta\), we have:
\end{lemma}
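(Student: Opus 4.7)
The plan is to derive the stated high-probability bound by reducing the $D^f_{h,\mathcal{H}}$ discrepancy to two one-sample uniform convergence problems and then applying Lemma~(\ref{equ:rademacher}) together with a contraction argument for the Fenchel conjugate composition. Concretely, I would denote by $\hat{D}^f_{h,\mathcal{H}}(P \parallel U)$ the empirical counterpart obtained by replacing the population expectations with their empirical averages over the $n$-sample $P$ and the $m$-sample $U$. Using the elementary inequality $|\sup_{h'} a(h') - \sup_{h'} b(h')| \le \sup_{h'} |a(h') - b(h')|$ and the triangle inequality inside the supremum, the quantity $|D^f_{h,\mathcal{H}}(P_P \parallel P_U) - \hat{D}^f_{h,\mathcal{H}}(P \parallel U)|$ is bounded by the sum
\[
\sup_{h' \in \mathcal{H}} \bigl| \mathbb{E}_{x \sim P_P}[\ell(h,h')] - \tfrac{1}{n}\sum_{i=1}^n \ell(h(x_i^P), h'(x_i^P)) \bigr| + \sup_{h' \in \mathcal{H}} \bigl| \mathbb{E}_{x \sim P_U}[f^* \circ \ell(h,h')] - \tfrac{1}{m}\sum_{i=1}^m f^*(\ell(h(x_i^U), h'(x_i^U))) \bigr|.
\]

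Next, I would treat the two suprema separately with Lemma~(\ref{equ:rademacher}), fixing $h$ and viewing the classes as indexed by $h' \in \mathcal{H}$. For the first term the relevant function class is exactly $\ell \circ \mathcal{H}$ restricted to the marginal of $P_P$, so a direct application yields $2\mathfrak{R}_P(\ell \circ \mathcal{H}) + \sqrt{\log(2/\delta)/(2n)}$ with probability at least $1-\delta/2$. For the second term I would consider the composed class $f^* \circ \ell \circ \mathcal{H}$; since $f^*$ is $L$-Lipschitz on $[0,1]$, Talagrand's contraction inequality gives $\mathfrak{R}_U(f^* \circ \ell \circ \mathcal{H}) \le L \cdot \mathfrak{R}_U(\ell \circ \mathcal{H})$. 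Boundedness of $f^*$ on $[0,1]$ (implied by Lipschitzness and $[0,1] \subseteq \mathrm{dom}\, f^*$) ensures that McDiarmid's bounded-difference inequality used in Lemma~(\ref{equ:rademacher}) applies with an explicit constant; this yields a bound of the form $2L\,\mathfrak{R}_U(\ell \circ \mathcal{H}) + B\sqrt{\log(2/\delta)/(2m)}$ for a range constant $B$ depending only on $L$ and $f^*(0)$.

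Finally, I would combine the two inequalities by a union bound over the two failure events of probability $\delta/2$ each, giving the claimed statement with probability at least $1-\delta$. The main obstacle is the composition term: one must justify both that $f^* \circ \ell$ takes values in a bounded range (so that the concentration ingredient behind Lemma~(\ref{equ:rademacher}) remains valid with an explicit constant) and that Talagrand's contraction applies in the one-sided form needed to preserve the absolute-value supremum; the Lipschitz assumption on $f^*$ together with $[0,1] \subseteq \mathrm{dom}\, f^*$ is precisely what enables both. Everything else (the triangle-inequality split, the fixing of $h$ to reduce to a single-class uniform bound, and the final union bound) is standard and amounts to careful bookkeeping of constants.
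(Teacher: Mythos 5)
Your proposal follows essentially the same route as the paper's proof: bound the difference of suprema by the supremum of differences, split via the triangle inequality into two one-sample uniform-deviation terms, apply the Rademacher-complexity property to each, use Talagrand's contraction to get $\mathfrak{R}(f^* \circ \ell \circ \mathcal{H}) \leq L\,\mathfrak{R}(\ell \circ \mathcal{H})$, and combine. One bookkeeping point: in the lemma the first argument $P_U$ carries the plain loss $\ell$ and the second argument $P_P$ carries $f^* \circ \ell$, so the factor $L$ should end up on $\mathfrak{R}_{P_P}(\ell \circ \mathcal{H})$ rather than on the $U$-side as in your split; otherwise your version (including the more careful $\delta/2$ union bound and the explicit range constant for $f^* \circ \ell$, which the paper glosses over) matches the paper's argument.
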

\vspace{-5mm}
\begin{small}
\begin{equation}
\begin{aligned}
 |D^f_{h,\mathcal{H}}(P_U \| P_P) - D^f_{h,\mathcal{H}}(U \| P)| \leq 2\mathfrak{R}_{P_U}(\ell \circ \mathcal{H}) + 2 L \mathfrak{R}_{P_P}(\ell \circ \mathcal{H}) + \sqrt{\frac{\log \frac{1}{\delta}}{2n}} + \sqrt{\frac{\log \frac{1}{\delta}}{2m}}
\end{aligned}
\end{equation}
\end{small}

Lemma \ref{lemma:df} shows that the empirical $D^f_{h,\mathcal{H}}$ converges to the true discrepancy, and the gap is bounded by the complexity of the hypothesis class and the number of examples.

\subsection{Fairness Bounds}
We now provide a fairness bound to estimate the performance difference between the protected and unprotected groups using the previously defined $D^f_{h, \mathcal{H}}$ divergence.

\begin{theorem}
\label{theorem:riskdiff}
Let $h^*$ be the ideal joint hypothesis, i.e., $h^* = \arg \min_{h \in \mathcal{H}} R^\ell_U(h) + R^\ell_P(h)$. The risk difference between the two groups is upper bounded by:
\begin{equation}
 R^\ell_P(h) - R^\ell_U(h) \leq  D^f_{h,\mathcal{H}}(P_U\|P_P) + R^\ell_U(h^*) + R^\ell_P(h^*).
\end{equation}
\end{theorem}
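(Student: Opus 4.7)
The plan is to adapt the standard domain-adaptation style argument (in the spirit of Ben-David et al., specialized here to $f$-divergence via $D^f_{h,\mathcal{H}}$) by inserting $h^*$ and the ground-truth labelers through the assumed triangle inequality on $\ell$, and then transferring an expectation from the protected-group distribution $P_P$ to the unprotected-group distribution $P_U$ using the variational discrepancy bound. Concretely, first I would rewrite
\begin{equation*}
R^\ell_P(h) = \mathbb{E}_{x \sim P_P}[\ell(h(x), a_P(x))] \leq \mathbb{E}_{x \sim P_P}[\ell(h(x), h^*(x))] + R^\ell_P(h^*),
\end{equation*}
using the triangle inequality on $\ell$ with $h^*$ as the intermediate predictor. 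Symmetrically, the triangle inequality on $P_U$ yields
\begin{equation*}
\mathbb{E}_{x \sim P_U}[\ell(h(x), h^*(x))] \leq R^\ell_U(h) + R^\ell_U(h^*).
\end{equation*}

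Next I would connect the two $\mathbb{E}[\ell(h,h^*)]$ terms through the discrepancy. Instantiating the supremum defining $D^f_{h, \mathcal{H}}(P_U \| P_P)$ with the admissible choice $h' = h^*$ gives
\begin{equation*}
\mathbb{E}_{x \sim P_P}\!\bigl[f^*(\ell(h(x), h^*(x)))\bigr] - \mathbb{E}_{x \sim P_U}\!\bigl[\ell(h(x), h^*(x))\bigr] \leq D^f_{h,\mathcal{H}}(P_U \| P_P).
\end{equation*}
The crucial bridging step is the inequality $f^*(t) \geq t$ on $\mathrm{dom}\, f^*$, which follows from the Fenchel--Young inequality $xy \leq f(x) + f^*(y)$ evaluated at $x = 1$, combined with the defining assumption $f(1) = 0$. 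Applying this pointwise to $t = \ell(h(x), h^*(x))$ (which lies in $[0,1] \subseteq \mathrm{dom}\, f^*$) yields
\begin{equation*}
\mathbb{E}_{x \sim P_P}\!\bigl[\ell(h(x), h^*(x))\bigr] \leq \mathbb{E}_{x \sim P_P}\!\bigl[f^*(\ell(h(x), h^*(x)))\bigr].
\end{equation*}

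Chaining these three inequalities, I obtain
\begin{equation*}
\mathbb{E}_{x \sim P_P}[\ell(h(x), h^*(x))] \leq \mathbb{E}_{x \sim P_U}[\ell(h(x), h^*(x))] + D^f_{h,\mathcal{H}}(P_U \| P_P) \leq R^\ell_U(h) + R^\ell_U(h^*) + D^f_{h,\mathcal{H}}(P_U \| P_P),
\end{equation*}
and substituting into the first display gives the desired bound on $R^\ell_P(h) - R^\ell_U(h)$ after subtracting $R^\ell_U(h)$ from both sides.

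I expect the only delicate step to be justifying the pointwise bound $f^*(t) \geq t$: it must be invoked in the right direction and relies on both $f(1) = 0$ and the containment $[0,1] \subseteq \mathrm{dom}\, f^*$ (the same conditions assumed in Lemma \ref{lemma:df}). Everything else is an almost mechanical combination of the triangle inequality for $\ell$ and the definition of $D^f_{h,\mathcal{H}}$ specialized at $h' = h^*$; no assumption of symmetry in the divergence, and no use of the Rademacher machinery, is needed because the statement is at the population level.
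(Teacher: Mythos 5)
Your proposal is correct and follows essentially the same route as the paper's proof: triangle inequality with $h^*$ on the protected side, the pointwise bound $\ell \leq f^*\circ\ell$ (from Fenchel--Young with $f(1)=0$), instantiating the discrepancy at $h'=h^*$, and a second triangle inequality on the unprotected side. You even make explicit the step $f^*(t)\geq t$ that the paper uses silently when passing from $R^\ell_P(h,h^*)$ to $R^{f^*\circ\ell}_P(h,h^*)$.
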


For the upper bound on the RHS, the first term corresponds to the discrepancy between the marginal distributions, and the remaining two terms measure the ideal joint hypothesis.  If $\mathcal{H}$ is expressive enough and the labeling functions are similar, the last two terms could be reduced to a small value. 

\begin{theorem}
\label{theorem:bound}
(Fairness with Rademacher Complexity) Under the same conditions as in Lemma \ref{lemma:df}, for any \(\delta \in (0, 1)\), with probability at least \(1 - \delta\), we have:
\end{theorem}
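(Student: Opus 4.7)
The plan is to derive Theorem \ref{theorem:bound} by combining Theorem \ref{theorem:riskdiff} (the population-level fairness bound) with three separate concentration inequalities that convert each population quantity on the right-hand side of Theorem \ref{theorem:riskdiff} into its empirical counterpart. Specifically, I would start from the inequality $R^\ell_P(h) - R^\ell_U(h) \leq D^f_{h,\mathcal{H}}(P_U\|P_P) + R^\ell_U(h^*) + R^\ell_P(h^*)$ and bound each of the three terms on the right-hand side separately in terms of observable quantities, then apply a union bound.

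First, I would invoke Lemma \ref{lemma:df} to replace $D^f_{h,\mathcal{H}}(P_U\|P_P)$ by the empirical discrepancy $D^f_{h,\mathcal{H}}(U\|P)$, picking up Rademacher complexity terms $2\mathfrak{R}_{P_U}(\ell\circ\mathcal{H}) + 2L\mathfrak{R}_{P_P}(\ell\circ\mathcal{H})$ and concentration terms of order $\sqrt{\log(1/\delta')/n}$ and $\sqrt{\log(1/\delta')/m}$. Next, I would apply the Rademacher complexity inequality (Equation \ref{equ:rademacher}) twice, once to $h^*$ on $P_U$ and once to $h^*$ on $P_P$, so that $R^\ell_U(h^*) \leq \hat{R}^\ell_U(h^*) + 2\mathfrak{R}_U(\ell\circ\mathcal{H}) + \sqrt{\log(1/\delta')/(2m)}$ and similarly for $R^\ell_P(h^*)$. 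Setting $\delta' = \delta/3$ and applying a union bound over the three events gives the result with probability at least $1-\delta$, yielding a bound of the form
\begin{align*}
R^\ell_P(h) - R^\ell_U(h) \leq{}& D^f_{h,\mathcal{H}}(U\|P) + \hat{R}^\ell_U(h^*) + \hat{R}^\ell_P(h^*) \\
&+ 2\mathfrak{R}_{P_U}(\ell\circ\mathcal{H}) + 2L\mathfrak{R}_{P_P}(\ell\circ\mathcal{H}) + 2\mathfrak{R}_U(\ell\circ\mathcal{H}) + 2\mathfrak{R}_P(\ell\circ\mathcal{H}) \\
&+ 2\sqrt{\tfrac{\log(3/\delta)}{2m}} + 2\sqrt{\tfrac{\log(3/\delta)}{2n}}.
\end{align*}

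The main obstacle I anticipate is careful bookkeeping of (i) the distinction between Rademacher complexities defined over the true distributions $P_U,P_P$ (as appear in Lemma \ref{lemma:df}) versus their empirical counterparts over samples $U,P$ (as typically appear in the concentration inequality of Equation \ref{equ:rademacher}); standard symmetrization arguments let one swap these at the cost of an extra $O(\sqrt{\log(1/\delta)/|D|})$ term, which I would absorb into the existing concentration terms. A second subtlety is that the risk difference $R^\ell_P(h) - R^\ell_U(h)$ is not symmetric; if an absolute value is intended in the statement, the same argument can be applied to the reversed difference and combined by a further union bound (adjusting $\delta/3$ to $\delta/6$). Finally, since the proof only chains together the three ingredients without any delicate new estimates, no step should require a novel technical idea beyond standard Rademacher-complexity machinery and the variational characterization of $f$-divergence already used in Lemma \ref{lemma:df}.
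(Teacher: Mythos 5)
Your proposal is correct and follows essentially the same route as the paper: it chains Theorem \ref{theorem:riskdiff} with Lemma \ref{lemma:df} and the Rademacher concentration inequality (\ref{equ:rademacher}) applied to $h^*$ on each group, and the final step $D^f_{h,\mathcal{H}}(U\|P)\le D_f(U\|P)$ is immediate from the definition. If anything, your bookkeeping (the $\delta/3$ union bound and the distinction between $\mathfrak{R}_{P_U},\mathfrak{R}_{P_P}$ and their empirical counterparts) is more careful than the paper's, which silently reuses the same $\delta$ and merges these complexities into the $4\mathfrak{R}_U+2(L+1)\mathfrak{R}_P$ terms.
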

\vspace{-5mm}
\begin{small}
\begin{equation}
\begin{aligned}
 & R^\ell_P(h) -  R^\ell_U(h)\leq  D_f(U \| P) + \hat{R}^\ell_U(h^*) + \hat{R}^\ell_P(h^*) \\
& + 4\mathfrak{R}_U(\ell \circ \mathcal{H})+ 2 (L+1) \mathfrak{R}_P(\ell \circ \mathcal{H}) + 2 \sqrt{\frac{\log \frac{1}{\delta}}{2m}}  + 2 \sqrt{\frac{\log \frac{1}{\delta}}{2n}}
 \end{aligned}
\end{equation}
\end{small}

\begin{table*}[t]
    \centering
    \caption{Characteristics of datasets.}
    \vspace{-1mm}
    \scalebox{0.75}{
    \begin{tabular}{c|c|c|c|c|c|c|c}
    \toprule
    \multirow{2}{*}{Datasets} & \multicolumn{2}{c|}{Unprotected Group} & \multicolumn{2}{c|}{Protected Group} & \multirow{2}{*}{\#Features } & \multirow{2}{*}{Sensitive Attribute } & \multirow{2}{*}{Anomaly Definition}\\ \cline{2-5}
           & \#Instances &\#Anomaly  & \#Instances &\#Anomaly  &    &  &  \\ \midrule 
    MNIST-USPS      & 7,785       & 882       & 1,876       & 323       & 1,024         &    Source of the digits           &  Digit 0 or not \\ \midrule
    MNIST-Invert     & 7,344       & 441       & 408         & 38        & 1,024         &    Color of the digits   &   Digit 0 or not \\ \midrule
    COMPAS          & 1,839       & 325       & 299         & 39        & 8             &    Race                    &  Reoffending or not  \\ \midrule
    CelebA    & 41,919      & 4,008     & 7,300       & 1,142     & 39            &    Gender                  & Attractive or not\\ \bottomrule
    \end{tabular}}
    \label{tab:dataset_statistics}
\end{table*}

Under the assumption of an ideal joint hypothesis, fairness can be improved by minimizing the discrepancy between the two distributions and regularizing the model to limit the complexity of the hypothesis class. The detailed proofs of the lemma and the theorems are in Appendix \ref{sec:allproofs} and \ref{app:proof45}. We further motivate why minimizing the objective $\lcon$ leads to small $D_f(U || P)$
for total variation in Appendix \ref{appendix:fairnessbound}.

\section{Experiments}
\label{sec:exp}
In this section, we experimentally analyze and compare our proposed \name\ with other anomaly detection methods. We try to answer the following research questions:
\begin{desclist}
    \item RQ1: How does \name\ compare with other baselines on imbalanced datasets?
    \item RQ2: How does \name\ perform with different ratios of the two groups?
    \item RQ3: How does each module contribute to \name?
\end{desclist}






\subsection{Experimental Setup}
\noindent \textbf{Datasets:} We conduct experiments on two image datasets, MNIST-USPS and MNIST-Invert \cite{DBLP:conf/fat/ZhangD21}, and two tabular datasets, COMPAS \cite{angwin2022machine} and CelebA \cite{liu2015deep}. The characteristics of the datasets are presented in Table \ref{tab:dataset_statistics}. 

\noindent \textbf{Baseline Methods:}
In our experiments, we compare our proposed framework \name\ with the following fairness-aware anomaly detection baselines:
 (1) \textbf{FairOD}~\cite{shekhar2021fairod}, a fair AD method which incorporates the prescribed criteria into its training;
 (2) \textbf{DCFOD}~\cite{DBLP:conf/kdd/SongLL21}, a fair deep clustering-based method, which leverages deep clustering to discover the intrinsic cluster structure and out-of-structure instances;
(3) \textbf{FairSVDD}~\cite{DBLP:conf/fat/ZhangD21}, an adversarial network to de-correlate the relationships between sensitive attributes and the learned representations. 
 We also compare with the following fairness-agnostic AD baselines:
 (4) \textbf{MCM}~\cite{anonymous2024mcm}, a masked modeling method to address AD by capturing intrinsic correlations between features in the training set;
 (5) \textbf{NSNMF}~\cite{ahmed2021neighborhood}, a non-negative matrix factorization method, which incorporates the neighborhood structural similarity information to improve the anomaly detection performance; 
 (6) \textbf{ReContrast}~\cite{guo2023recontrast}, a reconstructive contrastive learning-based method for domain-specific anomaly detection. Notice that as ReContrast is designed for image data, we only evaluate it on MNIST-USPS and MNIST-Invert datasets.

\noindent \textbf{Metrics:} To measure the model performance and group fairness, we choose three widely-used metrics~\cite{shekhar2021fairod, DBLP:conf/fat/ZhangD21, ahmed2021neighborhood}: (1) \textbf{Recall@k}, which measures the proportion of anomalies found in the top-k recommendations; (2) \textbf{ROCAUC}, which computes the area under the receiver operating characteristic curve; 
(3) \textbf{Rec Diff}, which measures the absolute value of the recall difference between two groups.

\noindent \textbf{Training details:} For the COMPAS dataset, we use a two-layer MLP with hidden units of [32, 32]. For all the other datasets, we use MLP with one hidden layer of dimension 128.  All our experiments were executed using one Tesla V100 SXM2 GPUs, supported by a 12-core CPU operating at 2.2GHz. We provide our implementation in Appendix \ref{app:code}.
\begin{table*}[t]
    \centering
    \caption{Performance on Image Datasets. The best score is marked in bold.}
    \vspace{-3mm}
    \scalebox{0.7}{
    \begin{tabular}{c|cccc|cccc}
        \toprule
        \multirow{2}{*}{Methods} & \multicolumn{4}{c|}{MNIST-USPS (K=1200)} & \multicolumn{4}{c}{MNIST-Invert (K=500)}\\\cmidrule{2-9}
         & Recall@K & ROCAUC & Rec Diff  &Time(s) & Recall@K & ROCAUC & Rec Diff  &Time(s) \\ \midrule
         FairOD     & \stat{12.20}{1.33} & \stat{49.96}{0.34}  & \stat{11.59}{0.78}    & 28.45    & \stat{7.66}{0.84} & \stat{50.46}{0.20} & \stat{8.02}{1.43}   & 20.33\\ \midrule
         DCFOD      & \stat{12.67}{0.39} & \stat{50.18}{0.30} & \stat{8.99}{1.02}     & 698.27   & \stat{6.89}{1.11} & \stat{50.51}{0.66} & \stat{7.24}{2.85}   & 1287.52\\ \midrule
         FairSVDD   & \stat{15.46}{1.67} & \stat{58.28}{1.22} & \stat{13.73}{2.64}& 766.36& \stat{12.45}{0.88}& \stat{49.69}{4.43}&\stat{12.43}{2.16}& 846.45\\  \midrule
         MCM        & \stat{39.83}{0.20} & \stat{78.84}{1.07}& \stat{55.83}{0.83}& 416.11& \stat{25.33}{0.60}&\stat{80.95}{0.64}&\stat{80.15}{1.45}& 750.17\\  \midrule
         NSNMF      & \stat{39.03}{0.99} & \stat{65.20}{0.56} & \stat{62.64}{4.66} & 28.2 & \stat{51.77}{0.75} & \stat{74.16}{0.40}  & \stat{51.43}{2.01} & 18.95 \\  \midrule
         Recontrast   & \stat{64.80}{3.69} & \stat{83.91}{4.49} & \stat{40.77}{6.83} & 118.99 & \stat{64.45}{1.88} & \stat{86.11}{5.89}  & \stat{55.33}{13.45} & 119.54 \\ \midrule
         \name       & 
         \bfstat{67.16}{0.37} & \bfstat{91.27}{0.49} & \bfstat{3.73}{2.13} & 122.84 & \bfstat{72.37}{0.32}& \bfstat{98.03}{0.01}& \bfstat{6.75}{0.34}& 52.28\\
        \bottomrule
    \end{tabular}}
    \label{tab:image_datasets}
\end{table*}

\begin{table*}[t]
    \centering
    \caption{Performance on Tabular Datasets. The best score is marked in bold.}
    \vspace{-3mm}
    \scalebox{0.7}{
    \begin{tabular}{c|cccc|cccc}
        \toprule
        \multirow{2}{*}{Methods} & \multicolumn{4}{c|}{COMPAS (K=350)} & \multicolumn{4}{c}{CelebA (K=5000)}\\\cmidrule{2-9}
         & Recall@K & ROCAUC & Rec Diff  &Time(s) & Recall@K & ROCAUC & Rec Diff  &Time(s) \\ \midrule
         FairOD      & \stat{16.58}{2.60} & \stat{50.12}{1.57}  & \stat{8.29}{1.28} & 4.20   & \stat{8.91}{0.16} & \stat{49.92}{0.14} &  \bfstat{0.67}{0.68} & 78.87 \\ \midrule
         DCFOD       & \stat{15.94}{2.35} & \stat{49.74}{1.42}  & \stat{9.75}{2.15} & 116.82 & \stat{9.75}{0.82} & \stat{49.93}{0.17} & \stat{7.51}{1.33} & 2523.57 \\ \midrule
         FairSVDD    & \stat{15.29}{2.25} & \stat{52.60}{5.48} &\stat{11.59}{4.28} &6.62& \stat{10.16}{0.58}& \stat{58.40}{1.21} &\stat{10.86}{2.02}& 248.95\\  \midrule
         MCM         & \stat{20.97}{0.69} & \stat{50.56}{0.51} &\stat{6.26}{2.90}& 38.23& \stat{11.07}{0.50}& \stat{46.08}{3.95}& \stat{27.06}{11.99}& 632.81\\  \midrule
         NSNMF   & \stat{22.89}{0.27} & \stat{57.96}{0.81} & \stat{36.04}{0.67} & 7.54 & \stat{10.88}{0.66} & \stat{50.40}{0.37}  & \stat{8.05}{1.63} & 1870.06\\  \midrule
         \name       & \bfstat{34.43}{0.42}& \bfstat{61.85}{0.52}& \bfstat{5.81}{4.36}& 17.94 &\bfstat{11.94}{0.67}& \bfstat{59.41}{0.58}& \stat{4.66}{1.72}& 52.81\\
        \bottomrule
    \end{tabular}}
    
    \label{tab:tabular_datasets}
\end{table*}

\subsection{Effectiveness and Efficiency of \name\ (RQ1)}
\label{effectiveness}
We first evaluate the effectiveness and efficiency of \name\ through comparison with baselines across four datasets by three independent runs.  The task performance (\ie, Recall@$K$ and ROCAUC), group fairness measure (\ie, Rec Diff), and their average training time are presented in Tables \ref{tab:image_datasets} and \ref{tab:tabular_datasets} (See Appendix \ref{more_recall_k} for Recall@$K$ with different $K$). We can observe that the fair AD baselines (FairOD, DCFOD, and FairSVDD) typically exhibit low discrepancies in recall. However, they also tend to suffer from reduced recall rates and ROCAUC scores, suggesting a compromise in overall task performance to enhance fairness.
On the other hand, the baselines that do not account for fairness, including MCM, NSNMF, and ReContrast, demonstrate high recall rates and ROCAUC scores but often at the expense of fairness, as evidenced by significant disparities across groups (\ie, a higher Rec Diff).  
Our \name\ instead addresses the challenge of imbalance between the groups and the imbalanced distributions of normal examples and anomalies. 
Remarkably, \name\ not only excels in task performance but also elevates the level of fairness, underscoring the effectiveness of our design in harmonizing fairness with anomaly detection in scenarios characterized by data imbalance. On the other hand, the training time of \name\ is always among the top 4 fastest methods across different datasets, showing the efficiency of our method.


\vspace{-2mm}
\begin{table*}[b]
    \centering
    \caption{Performance on MNIST-USPS with different ratios. The best score is marked in bold.}
    \vspace{-1mm}
    \scalebox{0.65}{
    \begin{tabular}{c|ccc|ccc|ccc}
        \toprule
        \multirow{2}{*}{Methods} & \multicolumn{3}{c|}{$|U|:|P|=1:1$ (K=650)} & \multicolumn{3}{c|}{$|U|:|P|=2:1$ (K=1000)} & \multicolumn{3}{c}{$|U|:|P|=4:1$ (K=1200)}\\\cmidrule{2-10}
         & Recall@K & ROCAUC & Rec Diff & Recall@K & ROCAUC & Rec Diff & Recall@K & ROCAUC & Rec Diff  \\ \midrule
         FairOD     & \stat{17.52}{1.17} & \stat{50.13}{0.64} & \stat{2.14}{0.62} & \stat{17.30}{1.24} & \stat{49.73}{0.74} & \stat{5.11}{0.55} & \stat{13.61}{0.22} & \stat{50.22}{0.13} & \stat{10.58}{1.01} \\ \midrule
         DCFOD      & \stat{17.08}{0.50} & \stat{50.09}{0.30} & \stat{3.25}{0.94} & \stat{16.92}{0.81} & \stat{49.54}{0.42} & \stat{2.76}{0.51} & \stat{14.14}{1.03} & \stat{50.44}{0.60} & \stat{7.11}{0.83} \\ \midrule
         FairSVDD   & \stat{24.56}{2.95} & \stat{54.87}{3.36} & \stat{14.24}{7.90} & \stat{18.09}{3.46} & \stat{52.77}{1.72} & \stat{4.85}{3.75} & \stat{21.10}{2.79} & \stat{63.46}{9.56} & \stat{18.38}{4.91} \\ \midrule
         MCM        & \stat{52.22}{1.35} & \stat{74.62}{1.24} & \stat{17.13}{2.73} & \stat{53.63}{1.76} & \stat{76.80}{1.04} & \stat{8.17}{6.36} & \stat{41.99}{4.06} & \stat{74.09}{0.45} & \stat{22.85}{4.60} \\ \midrule
         NSNMF       & \stat{48.71}{0.39} & \stat{68.96}{0.24} & \stat{40.25}{2.17} & \stat{41.07}{2.77} & \stat{64.08}{1.67} & \stat{54.18}{3.11} & \stat{38.87}{1.09} & \stat{64.71}{0.63} & \stat{62.98}{1.47} \\ \midrule
         Recontrast       & \stat{45.92}{1.85} & \stat{80.17}{3.08} & \stat{42.52}{3.31} & \stat{51.39}{1.75} & \stat{83.13}{2.94} & \stat{26.16}{1.79} & \stat{57.69}{2.36} & \stat{79.17}{4.09} & \stat{20.69}{3.57} \\ \midrule
         \name      & \bfstat{65.58}{0.47} &\bfstat{85.38}{0.37}& \bfstat{0.93}{0.87}& \bfstat{66.84}{0.83}& \bfstat{89.17}{0.09}&\bfstat{2.32}{1.08}& \bfstat{66.63}{0.72}&\bfstat{90.15}{0.22}& \bfstat{1.84}{0.68} \\
         \bottomrule
    \end{tabular}}
    \label{tab:ratio_mnist}
\end{table*}

\begin{table*}[t]
    \centering
     \caption{Performance on COMPAS dataset with different ratios. The best score is marked in bold.}
     \vspace{-1mm}
    \scalebox{0.65}{
    \begin{tabular}{c|ccc|ccc|ccc}
        \toprule
        \multirow{2}{*}{Methods} & \multicolumn{3}{c|}{$|U|:|P|=1:1$ (K=80)} & \multicolumn{3}{c|}{$|U|:|P|=2:1$ (K=120)} & \multicolumn{3}{c}{$|U|:|P|=5:1$ (K=240)}\\\cmidrule{2-10}
         & Recall@K & ROCAUC & Rec Diff & Recall@K & ROCAUC & Rec Diff & Recall@K & ROCAUC & Rec Diff \\ \midrule
         FairOD     & \stat{13.68}{2.67} & \stat{50.10}{0.85} & \stat{11.97}{1.48} & \stat{13.11}{0.50} & \stat{50.11}{0.74} & \stat{6.60}{0.97} & \stat{12.54}{1.37} & \stat{49.58}{0.87} & \stat{7.68}{0.72} \\ \midrule
         DCFOD      & \stat{11.54}{4.62} & \stat{48.50}{2.69} & \stat{7.69}{4.445} & \stat{15.95}{3.00} & \stat{53.28}{0.75} & \stat{10.68}{2.67} & \stat{12.96}{2.02} & \stat{49.76}{1.16} & \stat{6.36}{0.70} \\ \midrule
         FairSVDD   & \stat{16.24}{2.18} & \stat{52.34}{1.38} & \stat{6.84}{3.20} & \stat{14.53}{1.84} & \stat{51.69}{2.15} & \stat{7.69}{3.77}  & \stat{14.10}{4.53} & \stat{50.04}{4.98} & \stat{14.87}{7.54}\\ \midrule
         MCM        & \stat{18.38}{0.60} & \stat{40.77}{0.25} & \stat{7.69}{3.63} & \stat{16.24}{0.01} & \stat{40.42}{0.12} & \stat{10.26}{4.80} & \stat{18.81}{0.60} & \stat{44.04}{0.15} & \stat{5.76}{2.31} \\ \midrule
         NSNMF       & \stat{20.08}{0.74} & \stat{53.86}{0.42} & \stat{14.53}{10.36} & \stat{19.09}{1.31} & \stat{53.28}{0.75} & \stat{10.68}{2.67}  & \stat{20.09}{2.22} & \stat{53.86}{1.28} & \stat{10.77}{5.40} \\ \midrule
         \name      & \bfstat{29.91}{0.74} & \bfstat{61.87}{1.89}& \bfstat{3.42}{1.48}& \bfstat{28.42}{0.43}& \bfstat{57.39}{2.84}& \bfstat{1.92}{1.72}& \bfstat{29.77}{1.31}& \bfstat{58.05}{1.34}& \bfstat{4.83}{0.78} \\
         \bottomrule
    \end{tabular}}
    \label{tab:ratio_compas}
\end{table*}

\subsection{Data Imbalance Study (RQ2)}
To further study the performance of \name\ in handling imbalanced data, we vary the levels of group imbalance within the image dataset MNIST-USPS and the tabular dataset COMPAS.  We report the average results of three independent runs in Table \ref{tab:ratio_mnist} and Table \ref{tab:ratio_compas}. The tables demonstrate that \name\ consistently outperforms the baselines in terms of both task efficacy and fairness across different group ratios. The advantages of using \name\ become more pronounced with increasing level of group imbalance. For instance, while the performance of fair AD baselines drops with higher imbalance ratios on the MNIST-USPS dataset, \name\ adeptly sustains superior task performance alongside enhanced fairness levels, showcasing its robustness against data imbalance.

\begin{figure}[t]
    \centering
    \begin{tabular}{cc}
         \hspace{-3mm}\includegraphics[width=0.30\linewidth]{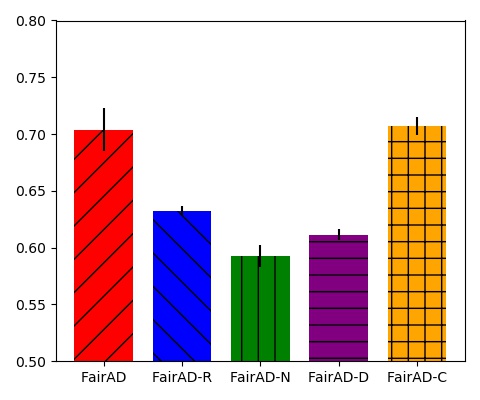} & \hspace{-3mm}\includegraphics[width=0.30\linewidth]{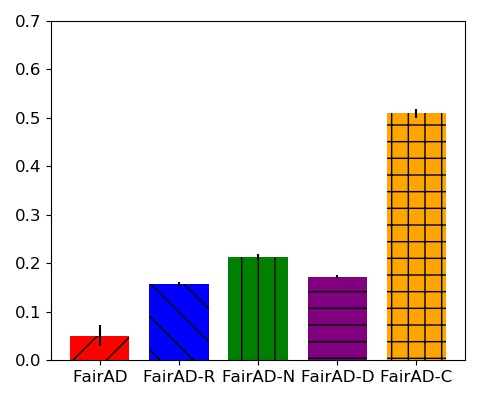}\\
        (a) Recall@1200 &  (b) Recall Difference \\ 
    \end{tabular}
    \caption{Ablation Study on MNIST-USPS dataset.}
    \label{fig:ablation_usps}
    \vspace{-3mm}
\end{figure}


\subsection{Ablation Study (RQ3)}
To validate the necessity of each module in \name, we conduct an ablation study to demonstrate the necessity of each component of \name\ on the MNIST-USPS dataset (more ablation studies can be found in Appendix~\ref{more_ablation_study}). The experimental results are presented in Figure~\ref{fig:ablation_usps}, where (a) and (b) show the recall@1200 and recall difference, respectively. Specifically, \name-R refers to a variant of our method replacing re-balancing autoencoder with $\mathcal{L}_2$ in Equation (\ref{equ:recons}); \name-N and \name-D remove $\mathcal{L}_\mathrm{fair}$ and
$\lunif$ in Equation (\ref{equ:contrastive}), respectively;  \name-C substitutes the proposed fair contrastive loss with the traditional contrastive loss (\ie, $\mathcal{L}_\mathrm{SimCLR}$). We have the following observations. (1) \name\ greatly outperforms \name-D and \name-N, which suggests that $\mathcal{L}_\mathrm{fair}$ and $\lunif$ are two essential components in our designed method. (2) \name-C has the competitive performance as \name\ with respect to recall@1200, but it has a large recall difference. This suggests that without proper regularization, the results exhibit unfair behaviors. Different from \name-C, \name\ achieves a much lower recall difference, which verifies our theoretical analysis that our proposed method could guarantee group fairness. (3) Compared with \name, \name-R has a lower recall rate but a higher recall difference. This indicates that replacing the re-balancing autoencoder with $\mathcal{L}_2$ results in worse performance, which verifies our conjecture that the traditional learning objective tends to mainly focus on learning the frequent patterns of the unprotected group while ignoring the protected group.

\section{Conclusion}
\label{sec:conclusion}
In this paper, we introduce \name, a fairness-aware anomaly detection method, designed for handling the imbalanced data scenario in the context of anomaly detection. Specifically, \name\ maximizes the similarity between the protected and unprotected groups to ensure fairness through the fairness-aware contrastive learning based module. To address the negative impact of imbalanced data, the re-balancing autoencoder module is proposed to reweigh the importance of both the protected and unprotected groups with the learnable weight. Theoretically, we provide the upper bound with Rademacher complexity for the discrepancy between two groups and ensure group fairness through the proposed contrastive learning regularization. Empirical studies demonstrate the effectiveness and efficiency of \name\ across multiple real-world datasets.


\clearpage


\newpage
\appendix
\onecolumn

\section{Notations}
\begin{table}[h!]
\centering
\caption{Notation Table.}
\label{table:symbols}
\begin{tabular}{cc}
\toprule
\textbf{Symbol} & \textbf{Description} \\
\midrule
$x$ & input feature\\
\midrule
$\pp$ & Protected group's distribution\\
\midrule
$\pu$ & Unprotected group's distribution\\
\midrule
$P$ & Protected group's empirical distribution\\
\midrule
$U$ & Unprotected group's empirical distribution\\
\midrule
$n / m$ & Size of protected/unprotected group\\
\midrule
$a_P/ a_U$ & labeling functions on protected/unprotected group\\
\midrule
$\ell$ & Loss function\\
\midrule
$R^\ell_D(h)$ & Risk of hypothesis $h$ over data $D$\\
\midrule
$\hat{R}^\ell_D(h)$ & Empirical risk of hypothesis $h$ over data $D$\\
\midrule
$s(x)$ & Anomaly score of example $x$\\
\midrule
$\mathfrak{R}_D(\mathcal{F}) $ & Rademacher complexity of $\mathcal{F}$ given data $D$\\
\bottomrule
\end{tabular}
\end{table}

\section{Related Work}
\label{related_work}
\textbf{Unsupervised Anomaly Detection.} 
Anomaly detection has been widely studied for decades in many real-world applications, including fraud detection in the finance domain~\cite{DBLP:journals/compsec/WestB16, DBLP:journals/access/HuangMYC18}, pathology analysis in the medical domain~\cite{DBLP:journals/bmcbi/FaustXHGVDD18, DBLP:journals/access/ShvetsovaBFSD21}, intrusion detection for cyber-security~\cite{DBLP:journals/jnca/LiaoLLT13, DBLP:journals/ett/AhmadKSAA21}, and fault detection in safety-critical systems~\cite{DBLP:journals/ijsysc/JuTLM21}, etc. The authors of ~\cite{DBLP:journals/csur/PangSCH21} divide the existing anomaly detection methods into two major branches. The methods~\cite{DBLP:conf/kdd/AudibertMGMZ20, DBLP:conf/icde/ChenDHZZZZ21, DBLP:conf/iccv/HouZZXPZ21, DBLP:conf/aaai/YanZXHH21, DBLP:conf/sdm/WangLMGGLW023} in the first branch aim to learn the patterns for the normal samples by a feature extractor. For instance, \cite{DBLP:conf/kdd/AudibertMGMZ20} is an encoder-decoder anomaly detection method, which learns how to amplify the reconstruction error of anomalies with adversarial training; \cite{DBLP:conf/icde/ChenDHZZZZ21} proposes a GAN-based autoencoder model to learn the normal pattern of multivariate time series, and detect anomalies by selecting the samples with the higher reconstruction error. The second branch aims at learning scalar anomaly scores in an end-to-end fashion~\cite{DBLP:conf/iclr/SohnLYJP21, DBLP:conf/icml/LiCCWTZ23, DBLP:conf/nips/JiangLWNWLWZ22}. Notably, the authors of \cite{DBLP:conf/iclr/SohnLYJP21} combine distribution-augmented contrastive regularization with a one-class classifier to detect anomalies; 
Different from these methods, this paper tackles the problem of fairness-aware anomaly detection by mitigating the representation disparity with contrastive learning-based regularization.

\textbf{Fair Machine Learning.}
Fair Machine Learning aims to amend the biased machine learning models to be fair or invariant regarding specific variables. A surge of research in fair machine learning has been done in the machine learning community\cite{DBLP:conf/kdd/KobrenSM19, zemel2013learning, bolukbasi2016man, hashimoto2018fairness, zhang2018mitigating}. For example, \cite{zemel2013learning} presents a learning algorithm for fair classification by enforcing group fairness and individual fairness in the obtained data representation;  \cite{bolukbasi2016man} proposes approaches to quantify and reduce bias in word embedding vectors that are trained from real-world data; in~\cite{hashimoto2018fairness}, the authors develop a robust optimization framework that minimizes the worst case risk over all distributions and preserves the minority group in an imbalanced data set; in~\cite{zhang2018mitigating}, the authors present an adversarial-learning based framework for mitigating the undesired bias in modern machine learning models. In the field of fair anomaly detection, \cite{DBLP:conf/fat/ZhangD21} utilizes the adversarial generative nets to ensure group fairness and use one-class classification to detect the anomalies; \cite{DBLP:conf/kdd/SongLL21} introduces fairness adversarial training and proposes a novel dynamic weight to reduce the negative impacts from outlier points. The existing fair anomaly detection methods~\cite{DBLP:conf/kdd/SongLL21, DBLP:conf/fat/ZhangD21, DBLP:conf/iccv/FioresiDS23} tend to suffer from the representation disparity issue in the imbalanced data scenario. To address this issue, this paper aims to alleviate the issue of representation disparity in the imbalanced data scenario by introducing the rebalancing autoencoder module and maximizing the uniformity of the samples in the latent space via contrastive learning regularization.

\section{Rademacher Complexity}
\label{appendix:rademacher}
The Rademacher complexity for a function class is:
\begin{definition} (Rademacher Complexity \cite{shalev2014understanding})
Given a space $\mathcal{X}$, and a set of i.i.d. examples D = \(\{x_1, x_2,...,x_{|D|}\} \subseteq \mathcal{X}\), for a function class \(\mathcal{F}\) where each function \(r: \mathcal{X} \rightarrow \mathbb{R}\), the empirical Rademacher complexity of \(\mathcal{F}\) is given by:
\begin{equation}
    \mathfrak{R}_D(\mathcal{F}) = \mathbb{E}_{\sigma}\left[\sup _{r \in \mathcal{F}}\left(\frac{1}{|D|}\sum_{i=1}^{|D|}\sigma_i r(z_i)\right)\right]
\end{equation}
\end{definition}
Here, \(\sigma_1,...,\sigma_m\) are independent random variables uniformly drawn from \(\{-1,1\}\).

\section{Divergences}
\label{appendix:divergence}
We include some popular $f$-divergences in Table \ref{tab:fdivergences}.
\begin{table}[ht]
\centering
\caption{Popular \( f \)-divergences and their conjugate functions.}
\scalebox{0.85}{
\begin{tabular}{ccccc}
\toprule
Divergence &  $f(x)$  & Conjugate $f^*(t)$ &  $f^\prime(1)$ & Activation func. \\
\midrule
Kullback-Leibler (KL) & \( x \log x \) & \( \exp(t - 1) \) & 1 & \( x \) \\
Reverse KL (KL-rev) & \( -\log x \) & \( -1 - \log(-t) \) & -1 & \( -\exp x \) \\
Jensen-Shannon (JS) & \( -{x + 1} \log \frac{1 + x}{2} + x \log x \) & \( -\log(2 - e^t) \) & 0 & \( \log\frac{2}{1+\exp(-x)} \) \\
Pearson \( \chi^2 \) & \( (x - 1)^2 \) & \( \frac{t^2}{4} + t \) & 0 & \( x \) \\
Total Variation (TV) & \( \frac{1}{2} |x - 1| \) & \( 1_{-1/2 \leq t \leq 1/2} \) & \( [-1/2, 1/2] \) & \( \frac{1}{2} \tanh x \) \\
\bottomrule
\end{tabular}}
\label{tab:fdivergences}
\end{table}

\section{Proofs}
\subsection{Proof of Lemma \ref{lemma:epsilon}}
\label{app:prooflemmaepsilon}
Let us divide the data into four types: unprotected normal examples (UN), protected normal examples (PN), unprotected anomalies (UA), and protected anomalies (PA). For type $t \in$ \{UN, PN, UA, PA\},
let $\mathcal{L}_0^t$ denote the loss of the unfitted model on $t$ and $\mathcal{L}_1^t$ as the loss of the fitted model on $t$, $\Delta^t = \mathcal{L}_0^t - \mathcal{L}_1^t > 0$. Assuming that the model can only fit two sets of data, to ensure that the model fits the sets of protected normal examples and unprotected normal examples, we need the following 5 inequalities to hold:

 $(1-\epsilon)(\mathcal{L}^{UN}_1 + \mathcal{L}_0^{UA}) + \epsilon (\mathcal{L}_1^{PN} + \mathcal{L}_0^{PA}) < $

\begin{enumerate}[leftmargin=*]
    \item $(1-\epsilon)(\mathcal{L}^{UN}_0 + \mathcal{L}_1^{UA}) + \epsilon (\mathcal{L}_1^{PN} + \mathcal{L}_0^{PA}) $, implied by $\Delta^{UN} > \Delta^{UA}$
    which naturally holds;
    \item $(1-\epsilon)(\mathcal{L}^{UN}_1 + \mathcal{L}_0^{UA}) + \epsilon (\mathcal{L}_0^{PN} + \mathcal{L}_1^{PA}) $, implied by $\Delta^{PN} > \Delta^{PA}$ which naturally holds;
    \item $(1-\epsilon)(\mathcal{L}^{UN}_0 + \mathcal{L}_1^{UA}) + \epsilon (\mathcal{L}_0^{PN} + \mathcal{L}_1^{PA}) $,  this case is equivalent to case 1 plus case 2;
    \item $(1-\epsilon)(\mathcal{L}^{UN}_1 + \mathcal{L}_1^{UA}) + \epsilon (\mathcal{L}_0^{PN} + \mathcal{L}_0^{PA}) $, we need $\epsilon > \frac{\Delta^{UA}}{\Delta^{UA}+ \Delta^{PN}}$;
    \item $(1-\epsilon)(\mathcal{L}^{UN}_0 + \mathcal{L}_0^{UA}) + \epsilon (\mathcal{L}_1^{PN} + \mathcal{L}_1^{PA}) $, we need $\epsilon < \frac{\Delta^{UN}}{\Delta^{UN}+ \Delta^{PA}}$.
\end{enumerate}
So we have: $ \frac{\Delta^{UA}}{\Delta^{UA}+ \Delta^{PN}} < \epsilon < \frac{\Delta^{UN}}{\Delta^{UN}+ \Delta^{PA}}$. We design $\epsilon = \frac{\mathcal{L}_0^U - \mathcal{L}_U}{\mathcal{L}_0^U - \mathcal{L}_U + \mathcal{L}_0^P - \mathcal{L}_P}$, and we discuss the following three cases:

\begin{itemize}[leftmargin=*]
    \item If $\mathcal{L}_U = \mathcal{L}_1^{UN} + \mathcal{L}_0^{UA}, \mathcal{L}_P = \mathcal{L}_1^{PN} + \mathcal{L}_0^{PA}$, then  $\epsilon = \frac{\Delta^{UN}}{\Delta^{UN} + \Delta^{PN}}$, which is within the range;
    \item If $\mathcal{L}_U = \mathcal{L}_1^{UN} + \mathcal{L}_1^{UA}, \mathcal{L}_P = \mathcal{L}_0^{PN} + \mathcal{L}_0^{PA}$, then  $\epsilon = 1$, it encourages to fit $\mathcal{L}_P$;
    \item If $\mathcal{L}_U = \mathcal{L}_0^{UN} + \mathcal{L}_0^{UA}, \mathcal{L}_P = \mathcal{L}_1^{PN} + \mathcal{L}_1^{PA}$, then  $\epsilon = 0$, it encourages to fit $\mathcal{L}_U$.
\end{itemize}

 We estimate $\mathcal L^U_0 = \sum_{i \in U} \|x_i - \overline{G(x)}\|^2$
 where $\overline{G(x)} = \frac{1}{|U|} \sum_{i \in U} G(x_i) $,  and $\mathcal L^P_0 = \sum_{i \in P} \|x_i - \overline{G(x)}\|^2$ where $\overline{G(x)} = \frac{1}{|P|} \sum_{i \in P} G(x_i) $. Let us denote this as loss1. We also provide results on real-world datasets with different designs of estimation in Table \ref{tab:diffloss}:
 \begin{itemize}[leftmargin=*]
     \item loss2: $\mathcal L^U_0 = \sum_{i \in U} \|x_i \|^2$ and $\mathcal L^P_0 = \sum_{i \in P} \|x_i \|^2$ 
     \item loss3: $\mathcal L^U_0 = \sum_{i \in U} \|G(x_i) - \overline{x}\|^2$ and $\mathcal L^P_0 = \sum_{i \in P} \|G(x_i) - \overline{x}\|^2$
     \item loss4: $\mathcal L^U_0 = \sum_{i \in U} \|x_i - \overline{x}\|^2$ and $\mathcal L^P_0 = \sum_{i \in P} \|x_i - \overline{x}\|^2$ 
 \end{itemize}

 \begin{table*}[t]
    \centering
    \caption{Performance of \name\ with different designs of $\mathcal{L}_0^U$ and $\mathcal{L}_0^P$.}
    \label{tab:diffloss}
    \scalebox{0.7}{
    \begin{tabular}{c|ccc|ccc}
        \toprule
        \multirow{2}{*}{Methods} & \multicolumn{3}{c|}{MNIST-USPS (K=1200)} & \multicolumn{3}{c}{MNIST-Invert (K=500)}\\\cmidrule{2-7}
         & Recall@K & ROCAUC & Rec Diff  & Recall@K & ROCAUC & Rec Diff  \\ \midrule
         
         loss1      & 
         \stat{67.16}{0.37} & \stat{91.27}{0.49} & \stat{3.73}{2.13} & \stat{72.37}{0.32}& \stat{98.03}{0.01}& \stat{6.75}{0.34}\\
         \midrule
         loss2 &\stat{66.47}{1.73}&\stat{90.60}{0.52}& \stat{4.78}{2.36} &\stat{72.44}{0.74}&\stat{98.04}{0.03}&\stat{7.22}{0.21}\\
         \midrule
         loss3 &\stat{66.31}{0.65}&\stat{91.37}{0.88}& \stat{6.32}{1.74} & \stat{71.39}{1.96}& \stat{97.22}{1.42}& \stat{8.95}{0.92}\\
         \midrule
         loss4 &\stat{66.56}{2.32}&\stat{90.88}{1.67}& \stat{2.54}{2.11}& \stat{71.92}{3.58}& \stat{97.01}{1.85}&\stat{8.96}{3.23}\\
        \bottomrule
    \end{tabular}}
\end{table*}
And we can see that although the results may vary with different estimation designs, our method always performs better than the baselines in both task performance and fairness.
 
\subsection{Proof of Lemma \ref{lemma:df}}
\label{sec:allproofs}
\begin{equation*}
\begin{aligned}
D^{f}_{h,\mathcal{H}} (P_U\|P_P) - D^{f}_{h,\mathcal{H}}(U\|P) &= \sup_{h' \in \mathcal{H}} \{ |R^{\ell}_{U}(h, h') - R^{f^* \circ \ell}_{P}(h, h')| \}  \\
&\quad - \sup_{h' \in \mathcal{H}} \{ |\hat{R}^{\ell}_{U}(h, h') - \hat{R}^{f^* \circ \ell}_{P}(h, h')| \} \\
&\leq \sup_{h' \in \mathcal{H}} \|R^{\ell}_{U}(h, h') - R^{f^* \circ \ell}_{P}(h, h')| - |\hat{R}^{\ell}_{U}(h, h') - \hat{R}^{f^* \circ \ell}_{P}(h, h')\| \\
&\leq \sup_{h' \in \mathcal{H}} |R^{\ell}_{U}(h, h') - R^{f^* \circ \ell}_{P}(h, h') - \hat{R}^{\ell}_{U}(h, h') + \hat{R}^{f^* \circ \ell}_{P}(h, h')| \\
&= \sup_{h' \in \mathcal{H}} |R^{\ell}_{U}(h, h') - \hat{R}^{\ell}_{U}(h, h') |+ |{R}^{f^* \circ \ell}_{P}(h, h') - \hat{R}^{f^* \circ \ell}_{P}(h, h') | \\
&\leq 2\mathfrak{R}_{P_U}(\ell \circ \mathcal{H}) + \sqrt{\frac{\log \frac{1}{\delta}}{2m}} + 2\mathfrak{R}_{P_P}(f^* \circ \ell \circ \mathcal{H}) + \sqrt{\frac{\log \frac{1}{\delta}}{2n}}
\end{aligned}
\end{equation*}
where the last inequality comes from the property of Rademacher complexity. Similarly, by Lemma 5.7 and Definition 3.2 of \cite{mohri2018foundations} we have: $\mathfrak{R}_{P_P}(f^* \circ \ell \circ \mathcal{H}) \leq L\mathfrak{R}_{P_P}(\ell \circ \mathcal{H})$, with $f^* \circ \ell \circ \mathcal{H} := \{ x \mapsto \phi(\ell(h(x), h'(x))): h, h' \in \mathcal{H} \}$.

\subsection{Proof of Theorem \ref{theorem:riskdiff}}
\label{app:proof45}
\begin{align*}
R^\ell_P(h, a_P) &\leq R^\ell_P(h, h^*) + R^\ell_P(h^*, a_P) & (\text{triangle inequality } \ell) \\
&= R^\ell_P(h, h^*) + R^\ell_P(h^*, a_P) - R^\ell_U(h, h^*) + R^\ell_U(h, h^*) \\
& \leq R^{f^* \circ \ell}_P(h, h^*) - R^\ell_U(h, h^*) + R^\ell_U(h, h^*) + R^\ell_P(h^*, a_P)\\
& \leq | R^{f^* \circ \ell}_P(h, h^*) - R^\ell_U(h, h^*) | + R^\ell_U(h, h^*) + R^\ell_P(h^*, a_P)\\
&\leq D^{f}_{h,\mathcal{H}}(P_U\|P_P) + R^\ell_U(h, h^*) + R^\ell_P(h^*, a_P) \\
&\leq D^{f}_{h,\mathcal{H}}(P_U\|P_P) + R^\ell_U(h, a_U)+ R^\ell_U(h^*, a_U) + R^\ell_P(h^*, a_P)\\
& = D^{f}_{h,\mathcal{H}}(P_U\|P_P) + R^\ell_U(h)+ R^\ell_U(h^*) + R^\ell_P(h^*)
\end{align*}

\subsection{Proof of Theorem \ref{theorem:bound} and the benefit of our design}
\label{appendix:fairnessbound}

Combining Theorem \ref{theorem:riskdiff}, Lemma \ref{lemma:df} and the property of Rademacher Complexity, we can easily get:

\begin{equation*}
\begin{aligned}
 R^l_P(h) &-  R^l_U(h)\leq  D^f_{h,\mathcal{H}}(U \| P)\\
& + \hat{R}^l_U(h^*) + 4\mathfrak{R}_U(\ell \circ \mathcal{H}) + 2 \sqrt{\frac{\log \frac{1}{\delta}}{2m}} \\
& + \hat{R}^l_P(h^*) + 2 (L+1) \mathfrak{R}_P(\ell \circ \mathcal{H}) + 2 \sqrt{\frac{\log \frac{1}{\delta}}{2n}}
 \end{aligned}
\end{equation*}

Since by definition we have $ D^f_{h,\mathcal{H}}(U \| P) \leq D_f(U\|P)$, and for $ D_f(U\|P)= \TV(U\|P) $, we have:
\begin{equation}
\begin{aligned}
 R^l_P(h) &-  R^l_U(h)\leq  \TV(U \| P)\\
& + \hat{R}^l_U(h^*) + 4\mathfrak{R}_U(\ell \circ \mathcal{H}) + 2 \sqrt{\frac{\log \frac{1}{\delta}}{2m}} \\
& + \hat{R}^l_P(h^*) + 2 (L+1) \mathfrak{R}_P(\ell \circ \mathcal{H}) + 2 \sqrt{\frac{\log \frac{1}{\delta}}{2n}}.
 \end{aligned}
\end{equation}

\newcommand{\X}{X}
\newcommand{\Y}{Y}
\renewcommand{\P}{P}
\newcommand{\U}{U}
\renewcommand{\r}{r}
\newcommand{\m}{x^*}
\newcommand{\z}{z}
\newcommand{\w}{w}
\newcommand{\eps}{\varepsilon}
\newcommand{\support}{\mathcal{X}}
\newcommand{\hP}{\hat{P}}
\newcommand{\hp}{\hat{p}}
\newcommand{\cx}{c_X}
\newcommand{\cy}{c_Y}
\newcommand{\cu}{c_U}
\newcommand{\cp}{c_P}
\newcommand{\x}{x}

Now we motivate why minimizing the objective $\lcon$ leads to small $\TV(U\|P)$. Let $U,P$ be the empirical distributions over the common measurable space $\support\coloneqq\{z_j^U\}_{j=1}^n\cup\{z_k^P\}_{k=1}^m$ with densities $\hp_{\U},\hp_{\P}$ that are $\cu,\cp$-Lipschitz with respect to $\ell_2$-norm, respectively. Let $\m\coloneqq\argmin_{\x\in\support}\abs{\hp_{\U}(\x)-\hp_{\P}(\x)}$, $\delta\coloneqq\abs{\hp_{\U}(\m)-\hp_{\P}(\m)}$, and 
\[
\sigma\coloneqq \sum_{\x\in\support}\norm{\x-\m}=\sum_{\x\in\support}\sqrt{2-2\log\cossim(\x,\m)}, 
\]
where the equality is due to law of cosine (and that $\cossim$ normalizes $z_j$). We first show how $\TV(U\|P)$ is related to $\delta$ and $\sigma$.
\begin{lemma}
\label{lem-tv1}
\[
\TV\left(U\|P\right)\le \frac{1}{2}\left(\abs{\support}\delta+(\cu+\cp)\sigma\right).
\]
\end{lemma}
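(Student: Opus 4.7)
The plan is to bound $\TV(U\|P)$ pointwise on the support using the triangle inequality anchored at $\m$, then sum over $\support$ and invoke the definition of $\sigma$.

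First I would write out the total variation for discrete empirical distributions on a common support,
\[
\TV(U\|P)=\tfrac{1}{2}\sum_{\x\in\support}\abs{\hp_{\U}(\x)-\hp_{\P}(\x)},
\]
so that the lemma reduces to controlling $\sum_{\x\in\support}\abs{\hp_{\U}(\x)-\hp_{\P}(\x)}$ by $\abs{\support}\delta+(\cu+\cp)\sigma$.

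Next, for each $\x\in\support$, I would apply the triangle inequality by inserting $\hp_{\U}(\m)$ and $\hp_{\P}(\m)$:
\[
\abs{\hp_{\U}(\x)-\hp_{\P}(\x)}\le \abs{\hp_{\U}(\x)-\hp_{\U}(\m)}+\abs{\hp_{\U}(\m)-\hp_{\P}(\m)}+\abs{\hp_{\P}(\m)-\hp_{\P}(\x)}.
\]
The middle term is $\delta$ by the definition of $\m$ and $\delta$. The first and third terms are handled by the Lipschitz assumptions on $\hp_{\U}$ and $\hp_{\P}$ with respect to $\ell_2$-norm, giving bounds $\cu\norm{\x-\m}$ and $\cp\norm{\x-\m}$ respectively. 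Combining yields
\[
\abs{\hp_{\U}(\x)-\hp_{\P}(\x)}\le \delta+(\cu+\cp)\norm{\x-\m}.
\]

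Finally, I would sum this pointwise bound over $\x\in\support$. The constant term contributes $\abs{\support}\delta$, while the remaining contribution is $(\cu+\cp)\sum_{\x\in\support}\norm{\x-\m}$, which is exactly $(\cu+\cp)\sigma$ by the stated definition of $\sigma$ (and the identity $\norm{\x-\m}=\sqrt{2-2\log\cossim(\x,\m)}$ from the law of cosines for unit vectors). Dividing by $2$ gives the claimed inequality. There is no real obstacle here: the argument is purely the triangle inequality plus Lipschitz continuity, and the only point that requires mild care is making sure the pointwise bound is anchored at $\m$ so that the irreducible gap appears as the single term $\delta$ per point rather than being summed as an $\ell_1$ distance.
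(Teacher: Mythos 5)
Your proposal is correct and follows essentially the same argument as the paper: the triangle inequality applied pointwise with $\hp_{\U}(\m)$ and $\hp_{\P}(\m)$ inserted, the middle term bounded by $\delta$, the outer terms bounded via the Lipschitz conditions by $(\cu+\cp)\norm{\x-\m}$, and summing over $\support$ to obtain $\abs{\support}\delta+(\cu+\cp)\sigma$. No gaps.
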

\begin{proof}
\begin{align*}
\TV\left(U\|P\right) &\coloneqq \frac12\sum_{\x\in\support}\abs{\hp_{\U}(\x)-\hp_{\P}(\x)} \\
&\le\frac12\sum_{\x\in\support}\abs{\hp_{\U}(\x)-\hp_{\U}(\m)}+\abs{\hp_{\U}(\m)-\hp_{\P}(\m)}+\abs{\hp_{\P}(\m)-\hp_{\P}(\x)} && \text{(triangle inequality)}\\
&=\frac12\left(\abs{\support}\delta+\sum_{\x\in\support}\abs{\hp_{\U}(\x)-\hp_{\U}(\m)}+\abs{\hp_{\P}(\x)-\hp_{\P}(\m)}\right)\\
&\le\frac12\left(\abs{\support}\delta+ (\cu+\cp)\sum_{\x\in\support}\norm{\x-\m}\right) && \text{(Lipschitz conditions)}\\
&=\frac{1}{2}\left(\abs{\support}\delta+(\cu+\cp)\sigma\right).
\end{align*}
\end{proof}

Next we motivate why minimizing our objective $\lcon$ leads to small $\delta$ and $\sigma$ simultaneously, hence small $\TV(U\|P)$. Recall that our fairness-aware contrastive loss is 
\[
\lcon \coloneqq \mathcal{L}_{\mathrm{fair}}+ \lunif,
\]
where 
\begin{align*}
\mathcal{L}_{\mathrm{fair}} &\coloneqq -\log\left(\sum_{j\in[n]}\sum_{k\in[m]}\cossim\left(z_j^U, z_k^P\right)\right),\\
\lunif &\coloneqq \log\left(\sum_{j\ne k} \cossim\left(z_j^U, z_k^U\right) +  \sum_{j\ne k} \cossim\left(z_j^P, z_k^P\right)\right).
\end{align*}
Intuitively, minimizing $\lcon$ leads to small $\lfair$ and $\lunif$ simultaneously, which correspond to large $\cossim(z_j^U,z_k^P)$ and small $\cossim(z_j^U,z_k^U),\cossim(z_j^P,z_k^P)$, which in turn correspond to small $\norm{z_j^U-z_k^P}$ and large $\norm{z_j^U-z_k^U},\norm{z_j^P-z_k^P}$. Hence it is natural to consider the following surrogate losses
\begin{align*}
\surfair&\coloneqq\sum_{j, k\in[n]}\norm{z_j^U-z_k^P},\\
\surunif&\coloneqq-(\sum_{j\ne k}\norm{z_j^U-z_k^U} +  \norm{z_j^P-z_k^P}).
\end{align*}
Then it follows immediately that $\sigma\le\surfair$, explaining why minimizing our objective $\lcon$ (hence $\surfair$) leads to small $\sigma$.

To see that $\delta\coloneqq\abs{\hp_{\U}(\m)-\hp_{\P}(\m)}$ cannot be too large, first consider the extreme case where $\{z_j^U\}_{j=1}^n\cap\{z_k^P\}_{k=1}^n=\emptyset$. Without loss of generality let $\norm{z_1^U-z_1^P}=\max_{j,k\in[n]}\norm{z_j^U-z_k^P}$. Then adjusting $z_1^U,z_1^P$ to be the unit vector on their angle bisector clearly decreases $\surfair$ without affecting $\surunif$ by much due to high uniformity within $\{z_j^U\}_{j=1}^n$ and $\{z_k^P\}_{k=1}^n$ respectively. Hence we may assume without loss of generality that $z_1^U=z_1^P=\m$. Next consider the extreme case where $\hp_{\U}(\m)=\frac1n$ and $\hp_{\P}(\m)=1$. Then adjusting $z_2^P=\argmax_{x\ne\m}\sum_{j\in[n]}\norm{x-z_j^U}$ clearly decreases $\surunif$ without affecting $\surfair$ by much due to high uniformity within $\{z_j^U\}_{j=1}^n$. Hence minimizing our objecive $\lcon$ leads to small $\delta\coloneqq\abs{\hp_{\U}(\m)-\hp_{\P}(\m)}$.

\section{Additional Experiments}
\subsection{Training details and Code}
\label{app:code}
For the COMPAS dataset, we use a two-layer MLP with hidden units of [32, 32]. For all the other datasets, we use MLP with one hidden layer of dimension 128.  All our experiments were executed using one Tesla V100 SXM2 GPUs, supported by a 12-core CPU operating at 2.2GHz. For three runs, we choose random seeds in [40,41,42].


\subsection{More effectiveness validation of \name\ under different k}
\label{more_recall_k}
We also conduct experiments on the four datasets with different choices of k, and the results are in Table \ref{tab:image_datasets_2} and Table \ref{tab:tabular_datasets_2}. The AUCROC scores are the same as in the main paper. We can also tell from the tables that accuracy difference is inadequate for measuring group fairness in the imbalanced setting. 
\begin{table*}[ht]
    \centering
    \caption{Performance on Image Datasets. }
    \scalebox{0.8}{
    \begin{tabular}{c|ccc|ccc}
        \toprule
        \multirow{2}{*}{Methods} & \multicolumn{3}{c|}{MNIST-USPS (K=1000)} & \multicolumn{3}{c}{MNIST-Invert (K=400)}\\\cmidrule{2-7}
         & Recall@K & Acc Diff& Rec Diff  & Recall@K & Acc Diff & Rec Diff   \\ \midrule
         FairOD      & \stat{10.46}{1.16} & \stat{4.35}{0.33} & \stat{13.21}{1.43} & \stat{6.05}{0.21}   & \stat{2.70}{0.15} & \stat{9.99}{1.18} \\ \midrule
         DCFOD       & \stat{10.24}{0.82} & \stat{4.79}{1.12} & \stat{8.40}{1.83} & \stat{5.57}{1.70}   & \stat{2.69}{0.37} & \stat{8.78}{2.31} \\ \midrule
         FairSVDD   &  \stat{13.75}{1.83} & \stat{5.73}{5.64} & \stat{13.49}{2.55} & \stat{10.57}{0.92}   & \stat{5.38}{3.12} & \stat{14.25}{2.96} \\  \midrule
         MCM        &  \stat{34.38}{0.32} & \stat{29.81}{0.84} & \stat{52.46}{0.94} & \stat{22.48}{0.54}   & \stat{8.32}{1.10} & \stat{64.37}{1.66} \\  \midrule
         NSNMF      & \stat{33.56}{0.70} & \stat{22.26}{0.40} & \stat{65.12}{2.36} & \stat{43.91}{0.84}   & \stat{4.54}{0.20} & \stat{55.20}{0.92}  \\  \midrule
         Recontrast   & \stat{45.73}{2.74} & \stat{10.59}{2.62} & \stat{29.62}{2.40} & \stat{52.00}{4.86} & \stat{13.81}{4.30} & \stat{54.96}{13.77}  \\ \midrule
         \name      & \stat{61.60}{2.50} & \stat{6.50}{0.89} & \stat{7.95}{5.94} & \stat{62.28}{3.24}   & \stat{1.62}{1.32} & \stat{7.02}{4.48}  \\ 
        \bottomrule
    \end{tabular}}
    
    \label{tab:image_datasets_2}
\end{table*}

\begin{table*}[ht]
    \centering
    \caption{Performance on Tabular Datasets}
    \scalebox{0.8}{
    \begin{tabular}{c|ccc|ccc}
        \toprule
        \multirow{2}{*}{Methods} & \multicolumn{3}{c|}{COMPAS (K=300)} & \multicolumn{3}{c}{CelebA (K=4500)}\\\cmidrule{2-7}
         & Recall@K & Acc Diff& Rec Diff  & Recall@K & Acc Diff & Rec Diff  \\ \midrule
         FairOD      & \stat{14.20}{1.83} & \stat{3.92}{1.63} & \stat{10.75}{0.90} & \stat{7.95}{0.21}   & \stat{4.94}{0.25} & \stat{2.26}{1.06}  \\  \midrule
         DCFOD       & \stat{13.10}{1.35} & \stat{3.57}{2.29} & \stat{7.23}{2.82} & \stat{8.64}{0.79}   & \stat{4.98}{0.40} & \stat{9.24}{1.12}   \\  \midrule
         FairSVDD    & \stat{13.02}{1.66} & \stat{3.90}{2.43} & \stat{9.45}{3.80} & \stat{8.82}{0.61}   & \stat{2.21}{0.40} & \stat{10.22}{2.33} \\   \midrule
         MCM         & \stat{16.87}{1.14} & \stat{4.10}{1.98} & \stat{10.17}{1.64} & \stat{9.26}{0.48}   & \stat{7.21}{5.98} & \stat{28.69}{12.14}   \\   \midrule
         NSNMF       & \stat{17.29}{1.42} & \stat{3.60}{1.93} & \stat{33.57}{1.22} & \stat{8.90}{1.09}   & \stat{5.66}{0.54} & \stat{40.51}{1.54} \\   \midrule
         \name       & \stat{19.14}{2.29} & \stat{9.35}{3.00} & \stat{4.75}{3.69} & \stat{10.56}{1.11}   & \stat{13.04}{0.30} & \stat{5.10}{1.52}   \\ 
        \bottomrule
    \end{tabular}}
    \label{tab:tabular_datasets_2}
\end{table*}

\subsection{More Ablation Study}
\label{more_ablation_study}
We conduct the ablation study to demonstrate the necessity of each component of \name\ on the Compas dataset. The experimental results are presented in Figure~\ref{fig:ablation_compas}, where (a) and (b) show the recall@350 and recall difference, respectively. Specifically, \name-R refers to a variant of our method replacing rebalancing autoencoder with $\mathcal{L}_2$ in Equation (\ref{equ:recons}); \name-N and \name-D remove $\mathcal{L}_\text{fair}$ and $\mathcal{L}_\text{unif}$ in Equation (\ref{equ:contrastive}), respectively;  \name-C substitute the proposed fair contrastive loss with the traditional contrastive loss (\ie, $\mathcal{L}_1$). We have the following observations: (1) \name\ greatly outperforms \name-D and \name-N, which suggests that $\mathcal{L}_\text{fair}$ and $\mathcal{L}_\text{unif}$ are two essential components in our designed method. (2). \name-C has the competitive performance as \name\ with respect to recall rate, while it has a large recall difference. This suggests that without proper regularization, the results exhibit unfair behaviors. Different from \name-C, \name\ achieves a much lower recall difference, which further verifies our assumption that our proposed method could guarantee group fairness.

\begin{figure}[h]
    \centering
    \begin{tabular}{cc}
         \includegraphics[width=0.25\linewidth]{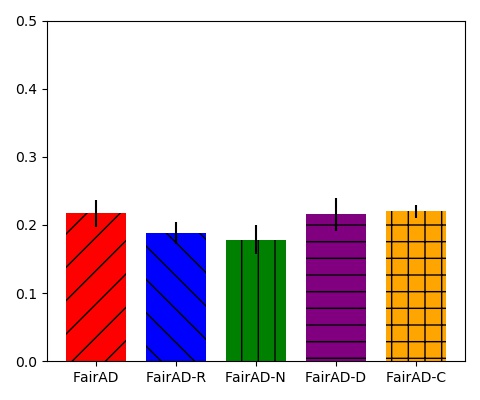} & \includegraphics[width=0.25\linewidth]{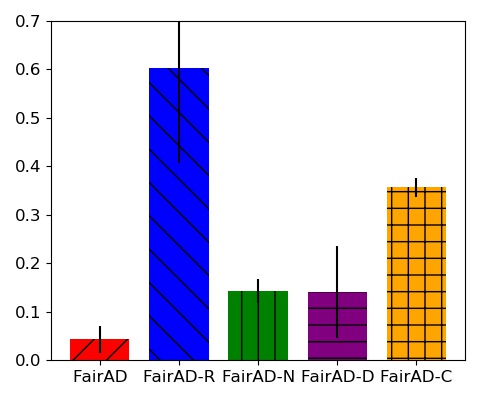}\\
        (a) Recall@350 &  (b) Recall Difference \\ 
    \end{tabular}
    \caption{Ablation Study on Compas dataset.}
    \label{fig:ablation_compas}
\end{figure}

\section{Limitations and Broader Impact}
This paper proposes a fairness-aware anomaly detection, which aims to provide fair results when the algorithm is applied to detect anomalies. Our method currently focus on the binary group fairness case. We can naturally extend our framework to the multi-attribute case by encouraging the similarity among the groups. Incoporating individual fairness notions would be an interesting future direction. By embedding fairness into anomaly detection algorithms, this work contributes to reducing bias and discrimination in AI applications, ensuring that technologies serve diverse populations equitably. In sectors such as finance, healthcare, and law enforcement, where anomaly detection plays a crucial role in identifying fraud, diseases, and criminal activities, incorporating fairness principles can prevent the perpetuation of historical biases and protect vulnerable groups from unjust outcomes. Furthermore, by advancing fairness in AI, this research aligns with global efforts to promote ethics in technology development, fostering trust between AI systems and their users.

\end{document}